%% file: colm2026_conference.tex
\documentclass{article} %
\usepackage[preprint]{colm2026_conference}

\usepackage{microtype}
\usepackage[table]{xcolor}
\usepackage{hyperref}
\usepackage{url}
\usepackage{lineno}
\usepackage{booktabs}
\usepackage{algorithm}
\usepackage{algorithmicx}
\usepackage{algpseudocode}
\usepackage{tcolorbox}
\usepackage{subcaption}
\usepackage{amsmath}
\usepackage{amsthm}
\usepackage{multirow}
\usepackage{arydshln}
\usepackage{wrapfig}
\usepackage{amssymb,amsfonts}
\usepackage{graphicx}
\usepackage{cleveref}
\newtheorem{theorem}{Theorem}[section]

\newtheorem{proposition}[theorem]{Proposition}

\theoremstyle{definition}

\theoremstyle{remark}
\newtheorem{remark}[theorem]{Remark}

\definecolor{darkblue}{rgb}{0, 0, 0.5}
\hypersetup{colorlinks=true, citecolor=darkblue, linkcolor=darkblue, urlcolor=darkblue}

\title{F-GRPO: Factorized Group-Relative Policy Optimization for Unified Candidate Generation and Ranking}

\author{
Rohan Surana$^{1}$, Gagan Mundada$^{1}$, Junda Wu$^{1}$, Xintong Li$^{1}$, Yizhu Jiao$^{2}$, Bowen Jin$^{2}$, \\
\textbf{Sizhe Zhou$^{2}$, Tong Yu$^{3}$, Ritwik Sinha$^{3}$, Jiawei Han$^{2}$, Jingbo Shang$^{1}$, Julian McAuley$^{1}$}
\\
$^{1}$UC San Diego \quad
$^{2}$University of Illinois at Urbana-Champaign \quad
$^{3}$Adobe Research \\
\texttt{\{rsurana,gmundada,juw069,xil240,jshang,jmcauley\}@ucsd.edu} \\
\texttt{\{yizhu,bowenj4,sizhez,hanj\}@illinois.edu} \quad
\texttt{\{tyu,ritwik\}@adobe.com} \\
}

\begin{document}

\ifcolmsubmission
\linenumbers
\fi

\maketitle

\input{contents/0_abstract}

\input{contents/1_introduction}

\input{contents/3.1_formulation}
\input{contents/4_methodology}
\input{contents/5_experiments}

\input{contents/2_related_new}

\input{contents/6_conclusion}

\input{colm2026_conference.bbl}
\bibliographystyle{colm2026_conference}

\appendix

\input{contents/7_appendix}
\end{document}

%% file: contents/0_abstract.tex
\begin{abstract}

Traditional retrieval pipelines optimize utility through stages of candidate retrieval and reranking, where ranking operates over a predefined candidate set.
Large Language Models (LLMs) broaden this into a generative process: given a candidate pool, an LLM can generate a subset and order it within a single autoregressive pass.
However, this flexibility introduces a new optimization challenge: the model must search a combinatorial output space while receiving utility feedback only after the full ranked list is generated. 
Because this feedback is defined over the completed sequence, 
it cannot distinguish whether a poor result arises from failing to generate a relevant subset or from failing to rank that subset correctly. 
This credit assignment gap makes end-to-end optimization both unstable and sample-inefficient.
Existing systems often address this challenge by separating candidate generation from ranking. 
However, such decoupling remains misaligned with downstream utility because the ranking stage is fundamentally limited by the candidate set it receives.
To bridge the optimization gap between candidate generation and ranking, 
we propose a unified framework that performs both within a single autoregressive rollout and optimizes them end-to-end via factorized group-relative policy optimization (F-GRPO).
Our framework factorizes the policy into candidate generation and ranking while sharing a single LLM backbone across both stages, 
and jointly trains them with an order-invariant coverage reward and a position-aware utility reward. 
To address the resulting phase-specific credit assignment problem, we use separate group-relative advantages for generation and ranking within a two-phase sequence-level objective.
Across sequential recommendation and multi-hop question answering benchmarks, F-GRPO improves top-ranked performance over the GRPO and decoupled baselines, outperforms supervised alternatives, and remains competitive with strong zero-shot rerankers, with no architectural changes at inference time.

\end{abstract}

%% file: contents/1_introduction.tex
\section{Introduction}

Retrieval and recommendation systems often face a coupled \emph{list-to-rank} problem, in which the system must identify a slate of relevant candidates and order them so that the best appear first~\citep{10.1561/1500000019,liu2009learning,ni2025large,li2025personalized,huang2025survey,xiesurvey,yanlist,10.1145/3474085.3475366}. Traditionally, this is handled by multi-stage pipelines that first retrieve candidates and then rerank them~\citep{nogueira-etal-2020-document,glass-etal-2022-re2g,yue2023llamarectwostagerecommendationusing,ni2026survey,wu2025doc,hu2025interactive,xie2024neighborhood,wu2024coral}. Large Language Models (LLMs) broaden this design space by serving as stronger rerankers and direct listwise rankers over retrieved candidate sets~\citep{sun-etal-2023-chatgpt,10.1007/978-3-031-56060-6_24,surana2025reviews}.

However, existing LLM-based approaches handle this coupling imperfectly. Some treat the LLM as a direct reranker over a fixed retrieved candidate pool~\citep{sun-etal-2023-chatgpt,10.1007/978-3-031-56060-6_24,xia2025sand,wu2025collap,xia2025knowledge}, so the model outputs only a final ranking and provides no handle for separating candidate coverage from ordering quality (Figure~\ref{fig:motivation-blackbox}). Others decompose the problem into separate modules~\citep{yue2023llamarectwostagerecommendationusing,trivedi-etal-2023-interleaving,huang2025towards}, such as a retriever followed by a reranker. While effective, this separation introduces additional models and optimizes proxy objectives for each stage independently~\citep{10.1145/3731120.3744583,10.1145/3477495.3531969,10.1145/3459637.3482328}, rather than optimizing the coupled list-to-rank decision end-to-end.

\begin{wrapfigure}{r}{0.48\textwidth}
    \vspace{-1.0em}
    \centering
    \begin{subfigure}{0.47\textwidth}
        \centering
        \includegraphics[width=\linewidth]{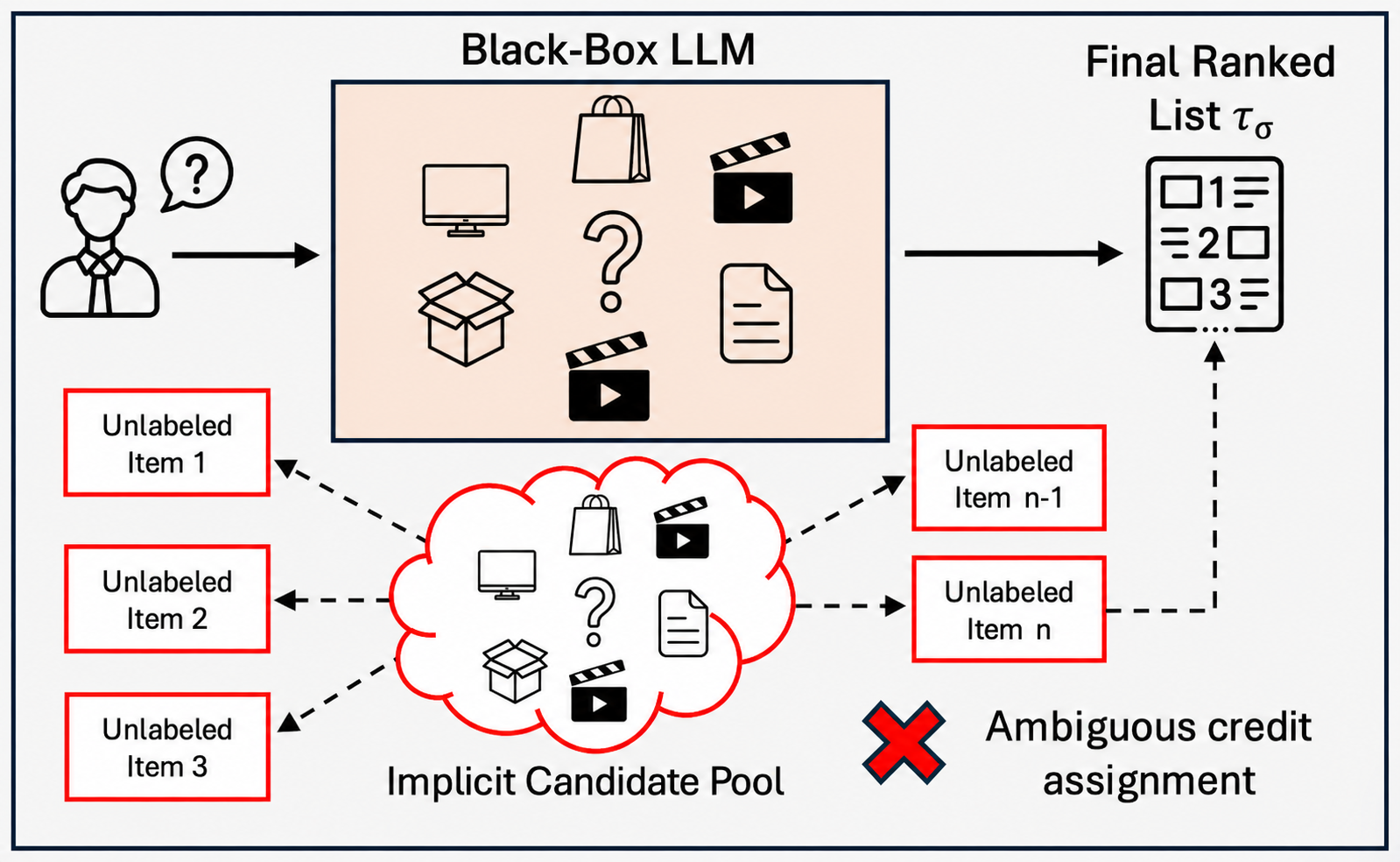}
        \caption{Black-box LLM with implicit candidate pool.}
        \label{fig:motivation-blackbox}
    \end{subfigure}
    \vspace{-0.3em}
    \begin{subfigure}{0.47\textwidth}
        \centering
        \includegraphics[width=\linewidth]{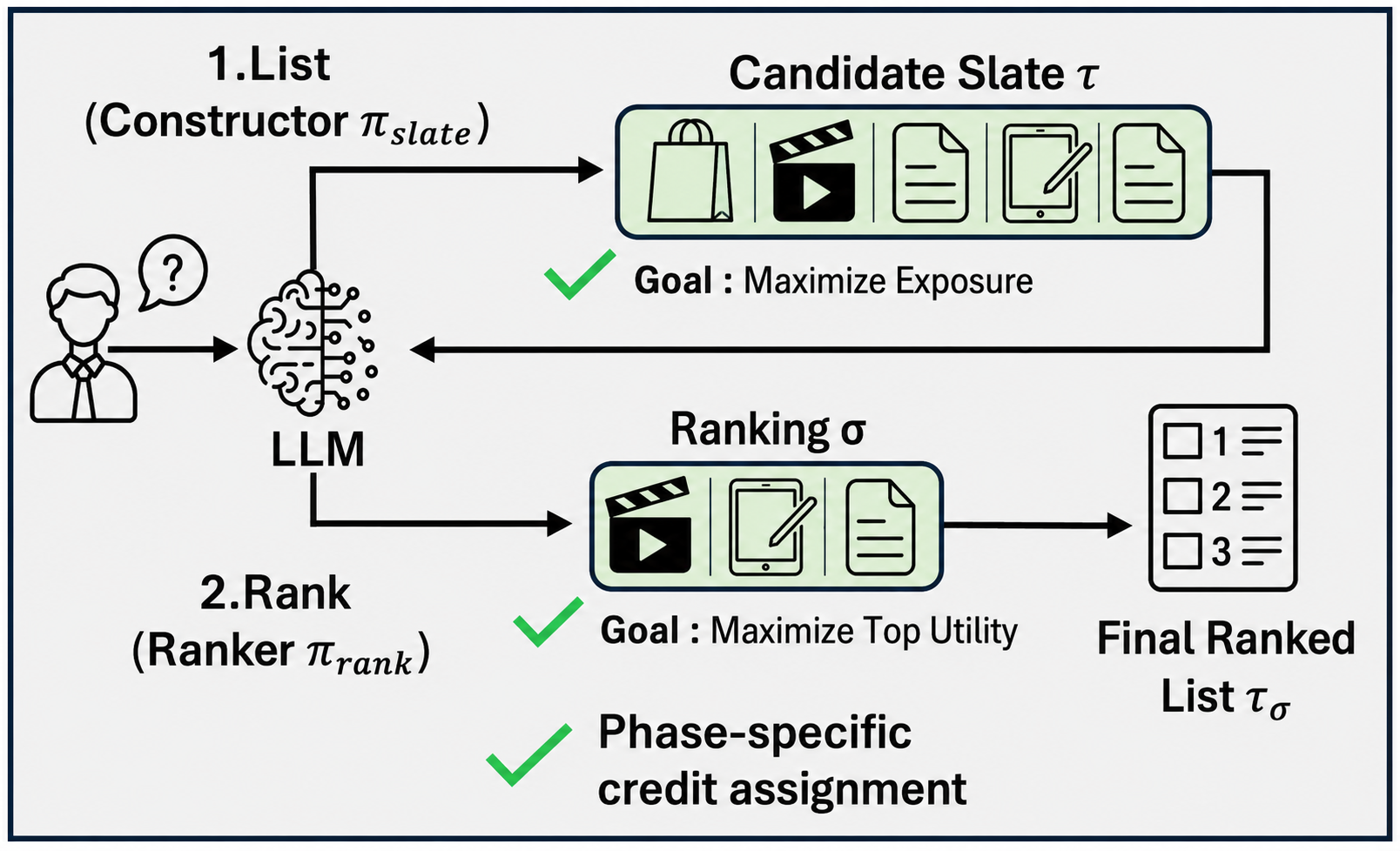}
        \caption{Factorized in-context generation and ranking with phase-specific goals.}
        \label{fig:motivation-factorized}
    \end{subfigure}
\caption{%
  \textbf{(a)}~Black-box ranking conflates candidate selection and 
  ordering, yielding ambiguous credit assignment.
  \textbf{(b)}~Factorized in-context generation and ranking with 
  phase-specific goals within a single autoregressive rollout.%
}
    \label{fig:motivation}
\end{wrapfigure}
We make the list-to-rank decision explicit within a single LLM rollout through \emph{in-context exploration}. The model first constructs a candidate slate and then ranks that slate within the same autoregressive trajectory (Figure~\ref{fig:motivation-factorized}). We term this \emph{in-context exploration} because the slate phase explicitly searches over the candidate space before committing to a ranking. Keeping both phases in the same context allows the ranking phase to condition directly on the generated candidates, enabling end-to-end optimization without the separate modules required by staged pipelines~\citep{yue2023llamarectwostagerecommendationusing,trivedi-etal-2023-interleaving,wu2024personalized}.

A central challenge is that a single sequence-level reward conflates slate construction and ranking, so the same feedback simultaneously rewards coverage and ordering~\citep{wu2025incontext,huang2026evaluation}. We therefore propose \emph{F-GRPO}, which extends GRPO~\citep{shao2024deepseekmathpushinglimitsmathematical,guo2025deepseek,mundada2026ws,surana2026generatefiltercontrolreplay} with \emph{two-phase sequence-level credit assignment} by assigning separate group-relative advantages to slate construction and ranking.

We evaluate F-GRPO on sequential recommendation (MovieLens~\citep{10.1145/2827872}, LastFM~\citep{Bertin-Mahieux2011}) and multi-hop question answering (HotpotQA~\citep{yang-etal-2018-hotpotqa}, MuSiQue~\citep{trivedi-etal-2022-musique}) using Qwen3-4B~\citep{qwen3technicalreport} and Qwen3.5-2B~\citep{qwen3.5} (\S\ref{sec:results_rec}, \S\ref{sec:results_qa}). Factorized credit assignment improves over GRPO, with the clearest gains at higher cutoffs and on settings where coverage is limiting. Analysis of training dynamics confirms the expected phase separation: the slate generator matures before the ranker, and error attribution is balanced across both phases, validating that each requires its own learning signal (\S\ref{sec:analysis}).

\paragraph{Contributions.} Our main contributions are as follows:
\begin{itemize}
      \item We formalize in-context generation and ranking as a factorized policy over a proposal slate and a permutation, defining an objective that jointly optimizes coverage and    
  listwise ordering within a single autoregressive rollout (\S\ref{sec:problem}).

\item We propose F-GRPO, a two-phase sequence-level GRPO method with group-relative advantages for each phase, and evaluate it on sequential recommendation and multi-hop question answering using slate and ranking rewards (\S\ref{sec:method}, \S\ref{sec:constructor_instantiations}).

\item Extensive empirical evaluations across diverse ranking tasks demonstrate that F-GRPO
achieves consistent improvements in Recall@$k$ and NDCG@$k$ over zero-shot, supervised, decoupled, and GRPO baselines (\S\ref{sec:results_rec}, \S\ref{sec:results_qa}).
\end{itemize}

%% file: contents/3.1_formulation.tex
\section{Problem Formulation}
\label{sec:problem}

We study a coupled list-to-rank decision in which, for each context $x \in \mathcal{X}$, a model must first construct a slate of candidates and then order that slate.
Let $\mathcal{E}$ denote the candidate space. Our goal is to optimize both (i) the \emph{coverage and quality} of the candidates surfaced by the model, and (ii) the \emph{ordering} so that higher-utility candidates appear earlier in the list.
The context $x$ absorbs all available information so that each task reduces to one coupled slate-construction-and-ranking decision.

\paragraph{Slate generation and in-context ranking as a factorized policy.}
Let $\tau = (e_1, \ldots, e_n)$ denote an ordered \emph{proposal slate} of $n$ candidates with $e_i \in \mathcal{E}$, and let $\sigma \in \mathcal{S}_n$ (the set of all permutations of $[n] = \{1, \ldots, n\}$) be a permutation specifying the final \emph{in-context ranked} ordering of these candidates.
We model the joint decision $(\tau, \sigma)$ with a factorized policy:
\begin{equation}
\label{eq:factorized}
\pi_\theta(\tau, \sigma \mid x)
= \pi^{\mathrm{slate}}_\theta(\tau \mid x) \cdot \pi^{\mathrm{rank}}_\theta(\sigma \mid x, \tau),
\end{equation}
where $\pi^{\mathrm{slate}}_\theta$ is the slate generator and $\pi^{\mathrm{rank}}_\theta$ is the ranker conditioned on the generated slate.

Both $\pi^{\mathrm{slate}}_\theta$ and $\pi^{\mathrm{rank}}_\theta$ are realized by a single autoregressive model with shared parameters: the model first generates tokens defining the slate $\tau$, then produces an ordering over those candidates within the same decoding trajectory.

\paragraph{Feedback signals.}
Let $U: \mathcal{X} \times \mathcal{E} \to \mathbb{R}_{\geq 0}$ be a task-dependent utility function, and let $\mathcal{G}(x) = \{e \in \mathcal{E} : U(x, e) > 0\}$ denote the set of \emph{gold} (relevant) items for context $x$.
The \emph{slate generator} receives an order-invariant reward over the relevant proposed items, while the \emph{ranker} receives a position-aware reward over the reordered list $\tau_\sigma = (e_{\sigma(1)}, \ldots, e_{\sigma(n)})$: $R_{\mathrm{slate}}(x, \tau) \triangleq \sum_{e \in \mathrm{uniq}(\tau)} U(x, e), \; R_{\mathrm{rank}}(x, \tau, \sigma) \triangleq U_{\mathrm{rank}}(x, \tau_\sigma).$
Here, $\mathrm{uniq}(\tau)$ denotes the set of distinct items in $\tau$.
These signals differ structurally: $R_{\mathrm{slate}}$ is order-invariant and depends only on coverage of relevant items, whereas $R_{\mathrm{rank}}$ is position-sensitive and depends on the ordering induced by $\sigma$. A single scalar reward applied uniformly across both phases would therefore couple two distinct objectives and obscure which phase caused success or failure.
The specific instantiations of $U$ and $U_{\mathrm{rank}}$ are given in Section~\ref{sec:constructor_instantiations}.

\paragraph{Learning objective.}
Let $\mathcal{D}$ denote the distribution over contexts.
We optimize:
\begin{equation}
\label{eq:objective}
\max_{\theta} \;
\mathbb{E}_{x \sim \mathcal{D}} \,
\mathbb{E}_{\tau \sim \pi^{\mathrm{slate}}_\theta(\cdot \mid x)} \,
\mathbb{E}_{\sigma \sim \pi^{\mathrm{rank}}_\theta(\cdot \mid x, \tau)}
\Bigl[ R_{\mathrm{slate}}(x, \tau) + \lambda \, R_{\mathrm{rank}}(x, \tau, \sigma) \Bigr],
\end{equation}
where $\lambda \geq 0$ controls the trade-off between candidate coverage and ranking quality.
Section~\ref{sec:method} shows how to optimize this objective using group-relative policy gradients with factorized credit assignment.

%% file: contents/4_methodology.tex
\section{F-GRPO: Factorized Group-Relative Policy Optimization}
\label{sec:method}

Given the two-phase structure in Section~\ref{sec:problem}, we optimize Eq.~\eqref{eq:objective} with GRPO and then specialize it to factorized credit assignment.

\subsection{Phase-Specific Losses}
\label{sec:unified_opt}

For each context $x$, GRPO samples a group of $G$ rollouts from the current sampling policy $\pi_{\theta_{\mathrm{old}}}$.
We parse each rollout $o^{(i)}$ into two segments: the slate content $c^{(i)}_{\tau}$ and the rank content $c^{(i)}_{\sigma}$, delimited by tag pairs (\texttt{<SLATE>}\ldots\texttt{</SLATE>} and \texttt{<RANK>}\ldots\texttt{</RANK>}). Each raw rollout $o^{(i)}$ is therefore the token-level realization of the decision pair $(\tau^{(i)}, \sigma^{(i)})$, with $c^{(i)}_{\tau}$ decoding to $\tau^{(i)}$ and $c^{(i)}_{\sigma}$ decoding to $\sigma^{(i)}$.
Delimiter tokens are included in the forward pass so that content tokens are conditioned on the correct tagged prefix, but are excluded from the loss via position-based masking (details in Appendix~\ref{app:delimiter_masking}). For readability, the conditioning expressions below suppress these fixed delimiter tokens.
At the full-sequence level, standard GRPO optimizes the clipped objective~\citep{shao2024deepseekmathpushinglimitsmathematical}:
\begin{equation}
\label{eq:grpo_objective}
\mathcal{J}_{\mathrm{GRPO}}(\theta) = \mathbb{E}\!\left[
\frac{1}{G} \sum_{i=1}^{G} \frac{1}{|o^{(i)}|} \sum_{t=1}^{|o^{(i)}|}
\min\!\Bigl( \rho^{(i)}_t \hat{A}^{(i)},\, \mathrm{clip}\bigl(\rho^{(i)}_t, 1{\pm}\epsilon_{\mathrm{clip}}\bigr) \hat{A}^{(i)} \Bigr)
- \beta_{\mathrm{KL}} \, D_{\mathrm{KL}}\!\bigl(\pi_\theta \,\|\, \pi_{\mathrm{ref}}\bigr) \right],
\end{equation}
where $\rho^{(i)}_t = \pi_\theta(o^{(i)}_t \mid x, o^{(i)}_{<t}) / \pi_{\theta_{\mathrm{old}}}(o^{(i)}_t \mid x, o^{(i)}_{<t})$ is the per-token importance ratio, $\hat{A}^{(i)}$ is the rollout-level group-relative advantage, $\epsilon_{\mathrm{clip}}$ is the clipping threshold, $\beta_{\mathrm{KL}}$ is the KL regularization coefficient, and $\pi_{\mathrm{ref}}$ is a frozen reference policy.

\subsection{Factorized Credit Assignment}
\label{sec:factorized_credit}

Each rollout $i$ produces a slate $\tau^{(i)}$ and a ranking permutation $\sigma^{(i)}$, yielding two scalar rewards: $R^{(i)}_{\mathrm{slate}} = R_{\mathrm{slate}}(x, \tau^{(i)})$ and $R^{(i)}_{\mathrm{rank}} = R_{\mathrm{rank}}(x, \tau^{(i)}, \sigma^{(i)})$ as defined in Section~\ref{sec:problem}.
Rather than combining these into a single scalar, we compute separate group-relative advantages using the mean-subtracted Dr.~GRPO variant:
\begin{equation}
\label{eq:separate_advantages}
\hat{A}^{(i)}_{\mathrm{slate}} = R^{(i)}_{\mathrm{slate}} - \bar{R}_{\mathrm{slate}}, \qquad
\hat{A}^{(i)}_{\mathrm{rank}} = R^{(i)}_{\mathrm{rank}} - \bar{R}_{\mathrm{rank}},
\end{equation}
where $\bar{R}_{\mathrm{slate}} = \frac{1}{G} \sum_{j=1}^{G} R^{(j)}_{\mathrm{slate}}$ and $\bar{R}_{\mathrm{rank}} = \frac{1}{G} \sum_{j=1}^{G} R^{(j)}_{\mathrm{rank}}$ are the per-prompt group means over the $G$ rollouts.
This phase-specific advantage assignment is the key departure from standard GRPO: standard GRPO applies one rollout-level advantage to all tokens in a completion, whereas F-GRPO applies different rollout-level advantages to the slate and rank token subsequences within the same autoregressive rollout.
This decoupling ensures that the slate generator receives gradient signal purely from coverage quality, while the ranker receives signal purely from ordering quality. The complete training procedure is summarized in \Cref{alg:training}.

\paragraph{Slate loss.}
The slate loss optimizes the probability of generating slate content conditioned on the prompt $x$:
\begin{equation}
\label{eq:slate_loss}
\mathcal{L}_{\mathrm{slate}} = -\frac{1}{G} \sum_{i=1}^{G} \frac{1}{|c^{(i)}_\tau|} \sum_{t=1}^{|c^{(i)}_\tau|} \min\!\bigl( \rho^{(i)}_{\tau,t}\, \hat{A}^{(i)}_{\mathrm{slate}},\; \mathrm{clip}(\rho^{(i)}_{\tau,t},\, 1{-}\epsilon_{\mathrm{clip}},\, 1{+}\epsilon_{\mathrm{clip}})\, \hat{A}^{(i)}_{\mathrm{slate}} \bigr),
\end{equation}
where $\rho^{(i)}_{\tau,t} = \pi_\theta(c^{(i)}_{\tau,t} \mid x, c^{(i)}_{\tau,<t}) \,/\, \pi_{\theta_{\mathrm{old}}}(c^{(i)}_{\tau,t} \mid x, c^{(i)}_{\tau,<t})$ is the per-token importance ratio and the advantage $\hat{A}^{(i)}_{\mathrm{slate}}$ is uniform across all slate tokens.

\paragraph{Rank loss.}
The rank loss optimizes the probability of generating ranking content conditioned on the prompt \emph{augmented with the generated slate}:
\begin{equation}
\label{eq:rank_loss}
\mathcal{L}_{\mathrm{rank}} = -\frac{1}{G} \sum_{i=1}^{G} \frac{1}{|c^{(i)}_\sigma|} \sum_{t=1}^{|c^{(i)}_\sigma|} \min\!\bigl( \rho^{(i)}_{\sigma,t}\, \hat{A}^{(i)}_{\mathrm{rank}},\; \mathrm{clip}(\rho^{(i)}_{\sigma,t},\, 1{-}\epsilon_{\mathrm{clip}},\, 1{+}\epsilon_{\mathrm{clip}})\, \hat{A}^{(i)}_{\mathrm{rank}} \bigr),
\end{equation}
where $\rho^{(i)}_{\sigma,t}$ is defined analogously with context $(x, \tau^{(i)})$.
Conditioning on the slate ensures the ranker learns to order the specific candidates it was given, preserving the autoregressive dependency between phases.

\paragraph{Combined objective.}
The total loss combines both phases with optional KL regularization:
\begin{equation}
\label{eq:combined_loss}
\mathcal{L}(\theta) = \mathcal{L}_{\mathrm{slate}} + \lambda \, \mathcal{L}_{\mathrm{rank}} + \beta_{\mathrm{KL}} \, D_{\mathrm{KL}}(\pi_\theta \| \pi_{\mathrm{ref}}).
\end{equation}
As shown in Appendix~\ref{app:theory}, this loss admits a first-order decomposition into slate and rank GRPO-style gradients on shared parameters, implemented with a single backward pass through the combined objective.
During training, rollouts that fail to produce the required delimiter tags receive a constant format penalty $p_{\mathrm{fmt}} < 0$ in place of the computed reward; Appendix~\ref{app:three_case} specifies how this penalty is applied to the generated tokens in malformed cases.

\subsection{Gradient Analysis}
\label{sec:gradient_analysis}

For comparison, consider a GRPO baseline that defines a single combined reward $R^{(i)}_{\mathrm{joint}} = R^{(i)}_{\mathrm{slate}} + \lambda \, R^{(i)}_{\mathrm{rank}}$ and computes a joint group-relative advantage
\begin{equation}
\label{eq:joint_advantage}
\hat{A}^{(i)}_{\mathrm{joint}} = R^{(i)}_{\mathrm{joint}} - \bar{R}_{\mathrm{joint}},
\end{equation}
where $\bar{R}_{\mathrm{joint}} = \frac{1}{G} \sum_{j=1}^{G} R^{(j)}_{\mathrm{joint}}$ is the corresponding per-prompt group mean. This joint advantage is applied uniformly to all tokens in rollout~$i$.
At $\theta = \theta_{\mathrm{old}}$, let $\mathcal{T}^{(i)}_\tau$ and $\mathcal{T}^{(i)}_\sigma$ denote the sets of token positions belonging to the slate content $c^{(i)}_\tau$ and rank content $c^{(i)}_\sigma$, respectively. The resulting policy gradient is
\begin{equation}
\label{eq:joint_gradient}
\nabla_\theta \mathcal{L}_{\mathrm{joint}} = -\frac{1}{G} \sum_{i=1}^{G} \frac{\hat{A}^{(i)}_{\mathrm{joint}}}{|o^{(i)}|} \biggl[\underbrace{\sum_{t \in \mathcal{T}^{(i)}_\tau} \nabla_\theta \log \pi_\theta(o^{(i)}_t \mid o^{(i)}_{<t})}_{\text{slate tokens}} + \underbrace{\sum_{t \in \mathcal{T}^{(i)}_\sigma} \nabla_\theta \log \pi_\theta(o^{(i)}_t \mid o^{(i)}_{<t})}_{\text{rank tokens}}\biggr],
\end{equation}
where $|o^{(i)}| = |\mathcal{T}^{(i)}_\tau| + |\mathcal{T}^{(i)}_\sigma|$.
Because a single $\hat{A}^{(i)}_{\mathrm{joint}}$ multiplies \emph{both} sums, the gradient direction for slate tokens is influenced by ranking quality and vice versa.
This creates a credit-assignment failure: consider a rollout with high ranking quality ($R^{(i)}_{\mathrm{rank}} \gg \bar{R}_{\mathrm{rank}}$) but poor coverage ($R^{(i)}_{\mathrm{slate}} < \bar{R}_{\mathrm{slate}}$).
The combined advantage may be positive, reinforcing the very slate tokens that failed to surface relevant candidates.
Conversely, excellent coverage paired with poor ranking yields a negative combined advantage that penalizes good candidate generation.
In general, whenever the two reward components are not perfectly correlated across rollouts, the joint advantage introduces cross-phase gradient contamination that conflates \emph{what} was proposed with \emph{how} it was ordered.
Such discordance is structural, not pathological: coverage is order-invariant and favors broad slates, while ranking quality is position-sensitive and rewards selective concentration. The two objectives inherently pull in different directions.

\begin{remark}[Phase-specific gradient weighting]
\label{rem:gradient_independence}
Although both losses update shared parameters $\theta$, the advantage weighting ensures that each phase's gradient is scaled solely by its own reward signal. At $\theta = \theta_{\mathrm{old}}$, the gradient decomposes as $\nabla_\theta \mathcal{L} = \nabla_\theta \mathcal{L}_{\mathrm{slate}} + \lambda \, \nabla_\theta \mathcal{L}_{\mathrm{rank}} + \beta_{\mathrm{KL}} \, \nabla_\theta D_{\mathrm{KL}}$, where $\nabla_\theta \mathcal{L}_{\mathrm{slate}}$ depends only on $\hat{A}^{(i)}_{\mathrm{slate}}$ and $\nabla_\theta \mathcal{L}_{\mathrm{rank}}$ depends only on $\hat{A}^{(i)}_{\mathrm{rank}}$ (see Appendix~\ref{app:theory} for the full derivation). This first-order separability eliminates the cross-phase gradient contamination identified in Eq.~\eqref{eq:joint_advantage}.
\end{remark}

%% file: contents/5_experiments.tex
\section{Experiments}

\subsection{Experimental Setup}

\paragraph{Tasks and datasets.}
\textbf{Sequential recommendation} (MovieLens~\citep{10.1145/2827872}, LastFM~\citep{Bertin-Mahieux2011}): select and rank items from a candidate set of 20.
\textbf{Multi-hop QA} (HotpotQA~\citep{yang-etal-2018-hotpotqa}, MuSiQue~\citep{trivedi-etal-2022-musique}): select and rank 2--4 gold evidence passages from 20 candidates.
Dataset statistics and preprocessing details are in Appendix~\ref{app:experiments}.

\paragraph{Models and training.}
We experiment with \emph{Qwen3-4B-Instruct-2507}~\citep{qwen3technicalreport} and \emph{Qwen3.5-2B}~\citep{qwen3.5}. We focus on the 2B--4B scale to enable thorough ablation under practical compute constraints. RL training is initialized from an SFT warm-start, except for Qwen3.5-2B on QA, which starts from the pretrained model. We use Dr.~GRPO~\citep{liu2025understanding} with $G{=}8$ rollouts per prompt and evaluate with greedy decoding ($T{=}0$). Full training and evaluation details, including SFT variants and hyperparameters, are provided in~\Cref{app:training_details,app:eval_metrics,app:prompts}.

\paragraph{Reward instantiation.}
\label{sec:constructor_instantiations}
For both tasks, we instantiate the ranking reward with NDCG@$k$:
\begin{equation}
\label{eq:ndcg_reward}
R_{\mathrm{rank}}^{\mathrm{NDCG}}(x, \tau, \sigma; k) = \frac{\mathrm{DCG@}k(\tau_\sigma)}{\mathrm{IDCG@}k(\tau)}, \quad
\mathrm{DCG@}k(\tau_\sigma) = \sum_{i=1}^{\min(k,n)} \frac{y_{\sigma(i)}}{\log_2(1 + i)},
\end{equation}
where $y_i \in \{0,1\}$ is the binary relevance of item $e_i$ and $\mathrm{IDCG@}k(\tau)$ is the maximum achievable DCG over all reorderings of $\tau$. The slate reward counts recalled gold items and normalizes the raw count to recall for \textbf{recommendation} and to F1 for \textbf{QA} before advantage computation. We set $\lambda = 1.0$ throughout and provide a $\lambda$ ablation in Section~\ref{sec:analysis}. The recommendation reward choice is analyzed in Section~\ref{sec:analysis}.

\paragraph{Baselines.}
We compare against three paradigms, each isolating a different factor: \emph{traditional} models calibrate against specialized architectures; \emph{LLM zero-shot} methods establish the pretrained floor; and \emph{LLM trained} methods ablate the contributions of RL and factorized credit assignment.
Decoupled SFT is the closest structural analog, with separately trained selector and ranker modules. GRPO is the direct ablation of factorized credit assignment: it uses the same training setup as F-GRPO, but replaces the phase-specific advantages with a single joint reward $R_{\mathrm{joint}} = R_{\mathrm{slate}} + \lambda R_{\mathrm{rank}}$ applied uniformly to all tokens.
Full baseline details are provided in Appendix~\ref{app:baselines}; GRPO, zero-shot prompt formats, and SFT variants are detailed in \Cref{app:llm_baselines,app:zeroshot_prompts,app:sft_details}.

\input{fig/results_rec}

\subsection{Sequential Recommendation}
\label{sec:results_rec}

We report Recall@$k$ and NDCG@$k$ for $k \in \{1, 3, 5\}$ on LastFM and MovieLens in Table~\ref{tab:rec_results} (Precision@$k$ and Hit@$k$ are deferred to Appendix~\ref{app:results}).

\textit{(i)~RL fine-tuning yields large gains over supervised methods.}
F-GRPO improves substantially over SFT on both datasets (e.g., LastFM Recall@3: +53.7\% relative; MovieLens Recall@3: +82.9\% relative), confirming that sequence-level reward optimization provides learning signal that token-level imitation cannot capture.

\textit{(ii)~Factorized credit assignment improves over GRPO.}
F-GRPO improves over GRPO, with the clearest gains at higher $k$, where the slate's broader coverage compounds with the ranker's ordering (e.g., a +10.6\% relative gain in LastFM Recall@5 over GRPO).

\textit{(iii)~F-GRPO is competitive with specialized sequential models.}
Traditional baselines (GRU4Rec, UniSRec) score over learned item embeddings in a closed set, whereas our LLM generates item names as free-form text. Despite this harder setting, the Qwen3-4B F-GRPO variant surpasses all traditional methods at every cutoff on MovieLens and at @3 and @5 on LastFM.

\textit{(iv)~Decoupled pipelines suffer from distribution mismatch.}
The full Decoupled~SFT variant underperforms single-model SFT across all metrics despite requiring two separately fine-tuned backbone checkpoints, one for selection and one for ranking. This underperformance arises because the ranker is trained on gold slates, creating a distribution shift at inference. F-GRPO avoids this entirely: the ranker conditions on the slate generator's own rollout output during training, so both phases co-adapt to each other's evolving behavior.

\begin{figure*}[ht]
  \centering
  \begin{subfigure}[t]{0.45\textwidth}
    \centering
    \includegraphics[width=\linewidth]{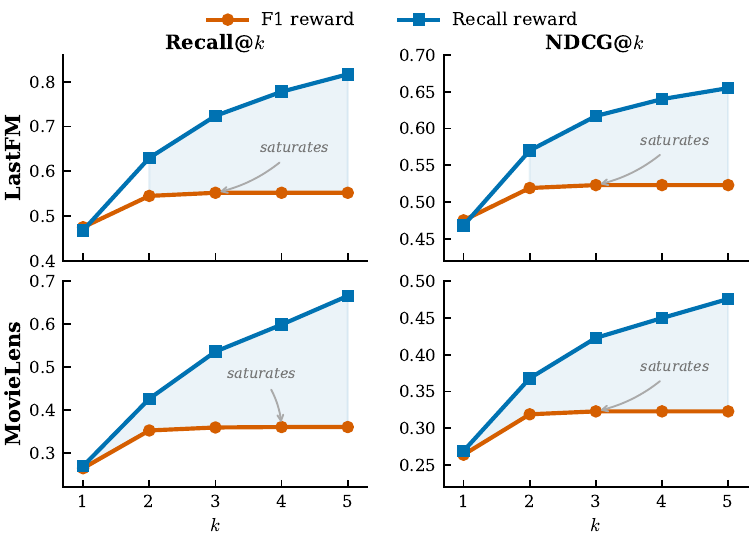}
    \caption{Slate reward ablation (Qwen3-4B).}
    \label{fig:reward_ablation}
  \end{subfigure}
  \hfill
  \begin{subfigure}[t]{0.54\textwidth}
    \centering
    \includegraphics[width=\linewidth]{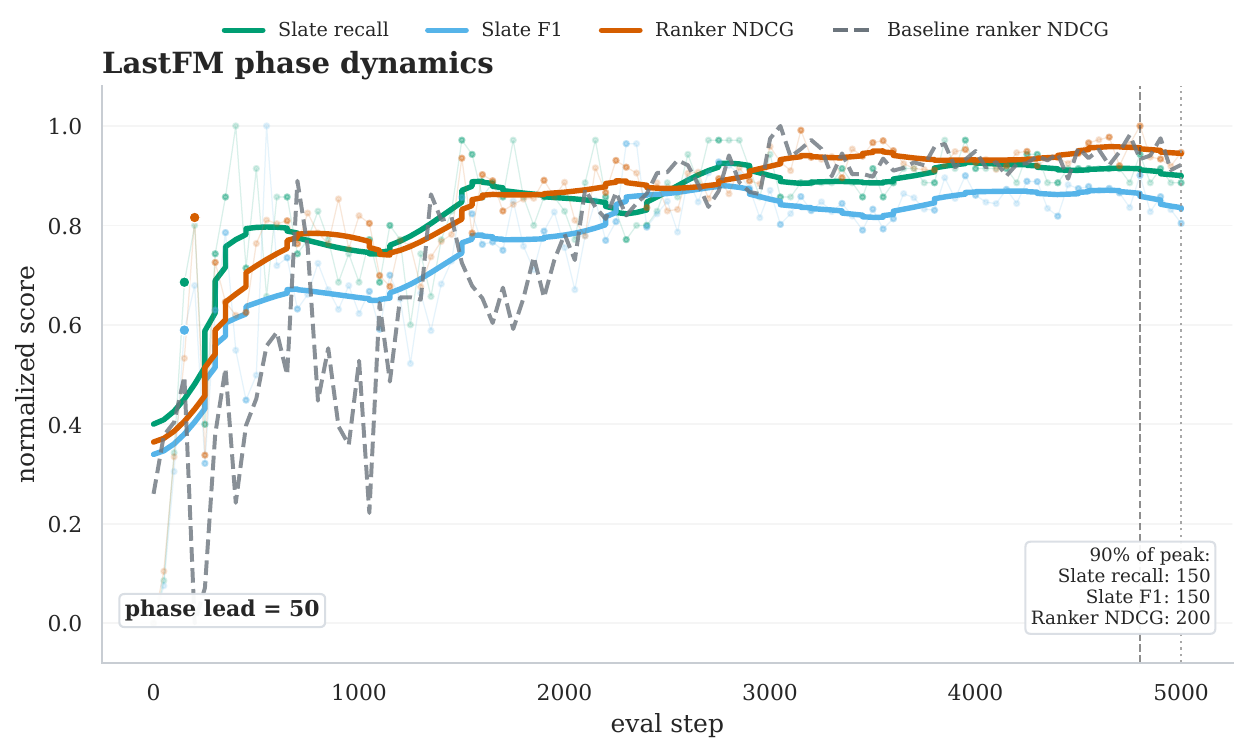}
    \caption{Phase separation on LastFM (Qwen3.5-2B).}
    \label{fig:rec_phase_dynamics}
  \end{subfigure}
  \caption{Training dynamics on LastFM. (a)~Slate reward ablation. (b)~The slate generator matures before the ranker.}
  \label{fig:training_dynamics}
\end{figure*}
\subsection{Multi-Hop Question Answering}
\label{sec:results_qa}
\input{fig/results_qa}

We report Recall@$k$ and NDCG@$k$ for $k \in \{1, 3, 5\}$ on MuSiQue and HotpotQA in Table~\ref{tab:qa_results} (Precision@$k$ and Hit@$k$ are deferred to Appendix~\ref{app:results}).

\textit{(i)~Factorized credit assignment helps most when coverage is the bottleneck.}
On MuSiQue, F-GRPO outperforms GRPO at both model scales, with gains concentrated at higher cutoffs (e.g., a +13.2\% relative gain in Recall@3 for Qwen3-4B).
On HotpotQA, the 4B models are essentially tied, while the 2B model shows a clear gap at @3 and above (a +5.9\% relative gain at Recall@3), suggesting phase-specific credit assignment is most beneficial when the backbone makes coverage harder.

\textit{(ii)~In-domain RL training competitive with dedicated rerankers.}
The reranker baselines are trained on MS~MARCO and applied zero-shot, whereas GRPO and F-GRPO are trained in-domain.
On HotpotQA, the F-GRPO 4B models outperform all dedicated rerankers despite using a general-purpose backbone rather than a reranking-specialized model.
On MuSiQue, F-GRPO exceeds MonoT5 at R@1 and @3, while being competitive at @5.

\textit{(iii)~Decoupled pipelines remain less reliable than factorized RL.}
The selector-only and full decoupled variants are consistently weak, and even the strongest variant (rank only) remains below F-GRPO at the cutoffs where coverage matters most.
This mirrors the recommendation setting: training the ranker on gold slates but deploying it on generated slates creates a distribution mismatch that end-to-end factorized RL avoids.
Representative HotpotQA outputs are provided in Appendix~\ref{app:qa_examples}.

\section{Analysis}
\label{sec:analysis}

\begin{figure*}[htp]
  \centering
  \includegraphics[width=\textwidth]{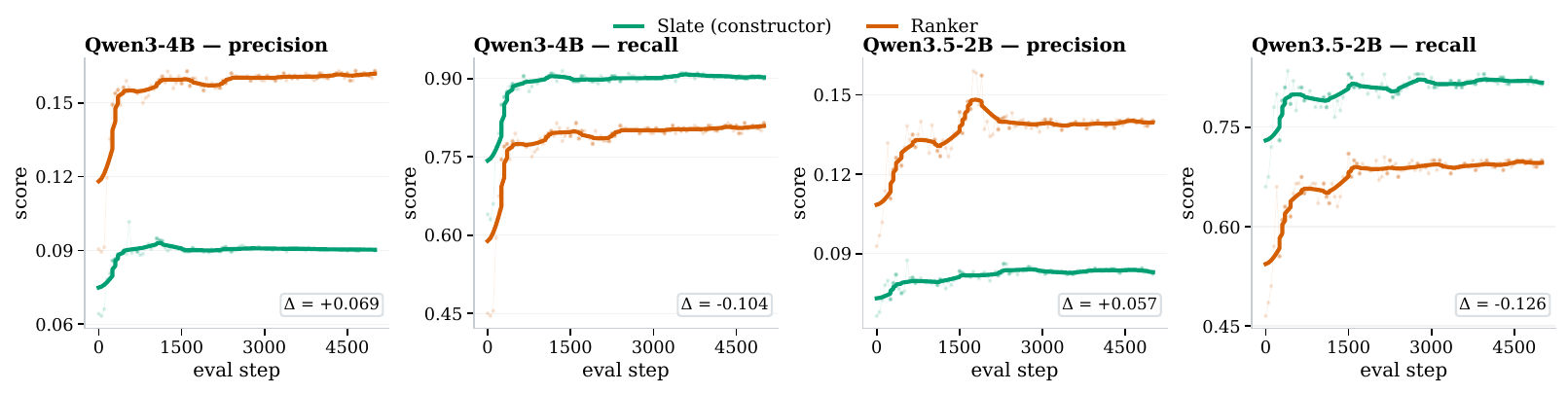}
  \caption{Precision--recall redistribution between the slate and ranker on LastFM across two model sizes. The ranker trades recall for precision by filtering the slate.}
  \label{fig:rec_precision_recall}
\end{figure*}
\paragraph{Slate reward formulation.}\Cref{fig:reward_ablation} compares F1- and recall-based slate rewards. At $k{=}1$, the two perform comparably, but the F1 reward saturates by $k{=}3$ on LastFM and $k{=}4$ on MovieLens, as its precision penalty discourages proposing candidates beyond the gold set. The recall reward improves monotonically, reaching a +48\% relative gain in Recall@5 on LastFM and +85\% on MovieLens. The main results use the recall-reward variant.

\paragraph{Phase separation.}\Cref{fig:rec_phase_dynamics} tracks when each component of the factorized policy matures during training. The slate generator reaches 90\% of its peak slate recall by step~150, while the ranker does not reach the same fraction of its peak NDCG until step~200. This ordering is consistent with the conditional structure of the factorized policy (Eq.~\eqref{eq:factorized}): because the ranker conditions on the slate ($\pi^{\mathrm{rank}}_\theta(\sigma \mid x, \tau)$), it cannot rank effectively until the slate generator provides a sufficiently informative candidate set.

\paragraph{Precision--recall redistribution.}\Cref{fig:rec_precision_recall} compares set-level precision and recall of the slate and ranker throughout training on LastFM across two model sizes.
The slate generator consistently achieves high recall (0.90 for Qwen3-4B, 0.82 for Qwen3.5-2B) but low precision, reflecting its role as a broad candidate generator. The ranker reverses this balance: at convergence, the ranker achieves precision 0.163 versus the slate's 0.090 for Qwen3-4B (0.140 vs.\ 0.082 for Qwen3.5-2B), while recall decreases correspondingly.
The pattern holds across both model scales, showing that factorized training consistently produces broad slate coverage followed by sharper top-of-list concentration.

Additional analyses of optimization dynamics and phase-specific error attribution are deferred to Appendix~\ref{app:optimization_dynamics} and~\ref{app:error_attribution}.

\subsection{Ablation Studies}
We conduct ablation studies on LastFM (Qwen3.5-2B) to isolate the contributions of each design choice. Full results and figures are in Appendix~\ref{app:ablation_results}. Sensitivity analysis shows that $\lambda{=}1.0$ and slate size $n{=}10$ are robust defaults (\Cref{fig:hyperparameter_ablation}). Underweighting the ranking loss with $\lambda{=}0.5$ consistently degrades top-position quality, while $n{=}5$ limits coverage and $n{=}15$ dilutes the pool. These trends are stable across Recall@$k$ and NDCG@$k$, supporting $\lambda{=}1.0$ and $n{=}10$ as effective defaults.

%% file: fig/results_rec.tex
\begin{table}[t]
  \centering
  \resizebox{\textwidth}{!}{%
  \begin{tabular}{@{}lcccccc@{\hspace{0.8em}}!{\vrule width 0.3pt}@{\hspace{0.8em}}cccccc@{}}
  \toprule
  & \multicolumn{6}{c}{LastFM} & \multicolumn{6}{c}{MovieLens} \\
  \cmidrule(lr){2-7} \cmidrule(lr){8-13}
  & \multicolumn{3}{c}{Recall@$k$} & \multicolumn{3}{c}{NDCG@$k$} & \multicolumn{3}{c}{Recall@$k$} & \multicolumn{3}{c}{NDCG@$k$} \\
  \cmidrule(lr){2-4} \cmidrule(lr){5-7} \cmidrule(lr){8-10} \cmidrule(lr){11-13}
  Method & @1 & @3 & @5 & @1 & @3 & @5 & @1 & @3 & @5 & @1 & @3 & @5 \\
  \midrule
  Random & 5.0 & 15.0 & 25.1 & 5.0 & 10.6 & 14.7 & 4.9 & 15.3 & 25.1 & 4.9 & 10.8 & 14.8 \\
  Popularity & 24.7 & 45.9 & 56.6 & 24.7 & 36.9 & 41.3 & 14.4 & 36.3 & 55.1 & 14.4 & 26.9 & 34.6 \\
  BM25 & 6.2 & 17.8 & 27.7 & 6.2 & 12.8 & 16.8 & 8.7 & 18.4 & 27.7 & 8.7 & 14.1 & 17.9 \\
  GRU4Rec & 50.8 & 69.0 & 75.5 & 50.8 & 61.3 & 64.0 & 21.8 & 46.8 & 60.2 & 21.8 & 36.1 & 41.6 \\
  UniSRec (zero-shot) & 11.0 & 25.4 & 36.9 & 11.0 & 19.2 & 23.9 & 6.3 & 16.5 & 25.6 & 6.3 & 12.0 & 15.8 \\
  UniSRec (fine-tuned) & 23.9 & 42.2 & 54.7 & 23.9 & 34.5 & 39.6 & 13.6 & 29.8 & 43.5 & 13.6 & 22.7 & 28.4 \\
  \addlinespace[3pt]
  \midrule
  \multicolumn{13}{@{}l}{\textbf{Qwen3-4B}} \\
  Zero-shot (two-step) & 18.8 & 44.7 & 58.4 & 18.8 & 34.0 & 39.6 & 12.1 & 29.0 & 40.0 & 12.1 & 21.6 & 26.1 \\
  Zero-shot (single-step) & 20.6 & 43.8 & 55.8 & 20.6 & 34.3 & 39.2 & 11.9 & 26.7 & 38.8 & 11.9 & 20.3 & 25.2 \\
  SFT & 40.1 & 47.1 & 53.8 & 40.1 & 44.0 & 46.8 & 21.5 & 29.3 & 36.5 & 21.5 & 25.8 & 28.8 \\
  \cdashline{1-13}\noalign{\vskip 2pt}
  Decoupled SFT (sel.\ only) & 14.1 & 33.3 & 46.0 & 14.1 & 25.3 & 30.5 & 10.1 & 25.3 & 36.0 & 10.1 & 18.9 & 23.3 \\
  Decoupled SFT (rank.\ only) & 28.8 & 42.9 & 51.9 & 28.8 & 37.0 & 40.7 & 16.5 & 28.0 & 37.0 & 16.5 & 23.0 & 26.8 \\
  Decoupled SFT (full) & 33.4 & 40.4 & 46.3 & 33.4 & 37.4 & 39.8 & 19.7 & 27.8 & 35.5 & 19.7 & 24.3 & 27.5 \\
  \cdashline{1-13}\noalign{\vskip 2pt}
  GRPO & \underline{44.7} & \underline{55.1} & \underline{73.9} & \underline{44.7} & \underline{50.8} & \underline{58.6} & \underline{23.9} & \underline{44.4} & \underline{57.7} & \underline{23.9} & \underline{35.8} & \underline{41.2} \\
  \textbf{F-GRPO (ours)} & \textbf{46.8} & \textbf{72.4} & \textbf{81.7} & \textbf{46.8} & \textbf{61.7} & \textbf{65.5} & \textbf{26.9} & \textbf{53.6} & \textbf{66.6} & \textbf{26.9} & \textbf{42.3} & \textbf{47.6} \\
  \midrule
  \addlinespace[3pt]
  \multicolumn{13}{@{}l}{\textbf{Qwen3.5-2B}} \\
  Zero-shot (two-step) & 0.6 & 1.6 & 2.3 & 0.6 & 1.1 & 1.5 & 1.2 & 1.8 & 2.2 & 1.2 & 1.5 & 1.7 \\
  Zero-shot (single-step) & 3.1 & 6.5 & 8.1 & 3.1 & 5.1 & 5.7 & 3.3 & 4.7 & 5.5 & 3.3 & 4.0 & 4.4 \\
  SFT & 37.9 & 45.4 & 52.6 & 37.9 & 42.1 & 45.1 & \textbf{27.7} & 34.7 & 41.0 & \textbf{27.7} & 31.6 & 34.2 \\
  \cdashline{1-13}\noalign{\vskip 2pt}
  Decoupled SFT (sel.\ only) & 0.5 & 1.3 & 2.0 & 0.5 & 0.9 & 1.2 & 0.7 & 1.8 & 2.7 & 0.7 & 1.3 & 1.7 \\
  Decoupled SFT (rank.\ only) & 23.1 & 31.6 & 38.3 & 23.1 & 28.0 & 30.8 & 12.3 & 20.7 & 27.1 & 12.3 & 17.1 & 19.7 \\
  Decoupled SFT (full) & 32.4 & 40.1 & 46.1 & 32.4 & 36.9 & 39.3 & 23.6 & 31.6 & 40.6 & 23.6 & 28.2 & 31.9 \\
  \cdashline{1-13}\noalign{\vskip 2pt}
  GRPO & \underline{38.7} & \underline{60.4} & \underline{71.3} & \underline{38.7} & \underline{51.3} & \underline{55.8} & \underline{24.4} & \textbf{50.7} & \underline{56.4} & \underline{24.4} & \textbf{39.6} & \textbf{42.1} \\
  \textbf{F-GRPO (ours)} & \textbf{40.0} & \textbf{63.2} & \textbf{74.3} & \textbf{40.0} & \textbf{53.5} & \textbf{58.1} & 21.5 & \underline{45.8} & \textbf{61.1} & 21.5 & \underline{35.5} & \underline{41.8} \\
  \bottomrule
  \end{tabular}%
  }
  \caption{Recall@$k$ and NDCG@$k$ on LastFM and MovieLens for Qwen3-4B and Qwen3.5-2B. All values are percentages. \textbf{Bold}: best among LLM methods; \underline{Underlined}: second best.}
  \label{tab:rec_results}
\end{table}

%% file: fig/results_qa.tex
\begin{table}[t]
  \centering
  \resizebox{\textwidth}{!}{%
  \begin{tabular}{@{}lcccccc@{\hspace{0.8em}}!{\vrule width 0.3pt}@{\hspace{0.8em}}cccccc@{}}
  \toprule
  & \multicolumn{6}{c}{MuSiQue} & \multicolumn{6}{c}{HotpotQA} \\
  \cmidrule(lr){2-7} \cmidrule(lr){8-13}
  & \multicolumn{3}{c}{Recall@$k$} & \multicolumn{3}{c}{NDCG@$k$} & \multicolumn{3}{c}{Recall@$k$} & \multicolumn{3}{c}{NDCG@$k$} \\
  \cmidrule(lr){2-4} \cmidrule(lr){5-7} \cmidrule(lr){8-10} \cmidrule(lr){11-13}
  Method & @1 & @3 & @5 & @1 & @3 & @5 & @1 & @3 & @5 & @1 & @3 & @5 \\
  \midrule
  Random & 5.0 & 15.0 & 24.7 & 13.3 & 14.9 & 19.6 & 5.0 & 15.0 & 25.1 & 9.9 & 13.1 & 18.1 \\
  BM25 & 22.9 & 42.9 & 53.4 & 57.0 & 46.7 & 51.4 & 41.8 & 72.2 & 82.7 & 83.6 & 73.2 & 78.5 \\
  RankZephyr & 33.5 & 61.3 & 71.5 & 82.3 & 67.1 & 71.1 & 47.4 & 84.9 & 90.3 & 94.8 & 85.9 & 88.6 \\
  MonoT5 & 34.9 & 64.5 & 74.9 & 86.1 & 70.7 & 74.6 & 46.3 & 86.2 & 92.8 & 92.5 & 86.2 & 89.5 \\
  DuoT5 & 34.3 & 62.8 & 73.5 & 84.8 & 69.2 & 73.3 & 43.2 & 79.8 & 86.2 & 86.4 & 80.0 & 83.3 \\
  LiT5 & 33.6 & 54.7 & 64.4 & 82.7 & 61.9 & 65.6 & 46.3 & 78.4 & 85.2 & 92.5 & 80.3 & 83.7 \\
  \addlinespace[3pt]
  \midrule
  \multicolumn{13}{@{}l}{\textbf{Qwen3-4B}} \\
  Zero-shot (two-step)         & 4.1 & 8.5 & 9.0 & 10.2 & 9.1 & 9.2 & 3.0 & 5.8 & 6.1 & 6.0 & 5.8 & 5.9 \\
  Zero-shot (single-step)      & 23.4 & 48.1 & 56.3 & 58.4 & 51.6 & 54.7 & 30.7 & 61.0 & 65.4 & 61.4 & 60.2 & 62.4 \\
  SFT                          & 27.9 & 54.4 & 60.9 & 69.5 & 59.5 & 61.5 & 38.3 & 68.7 & 73.0 & 76.6 & 69.6 & 71.8 \\
  \cdashline{1-13}\noalign{\vskip 2pt}
  Decoupled SFT (sel.\ only)   & 8.6 & 16.3 & 18.3 & 21.7 & 18.0 & 18.5 & 9.8 & 19.1 & 20.6 & 19.6 & 18.9 & 19.7 \\
  Decoupled SFT (rank.\ only)  & 32.3 & \underline{63.0} & \underline{68.3} & 79.8 & 68.6 & \underline{69.7} & \underline{43.8} & 84.3 & 89.2 & 87.6 & 84.0 & 86.5 \\
  Decoupled SFT (full)         & 24.0 & 39.0 & 44.2 & 60.6 & 44.9 & 46.5 & 32.6 & 49.0 & 52.9 & 65.1 & 52.0 & 54.0 \\
  \cdashline{1-13}\noalign{\vskip 2pt}
  GRPO                         & \underline{36.4} & \underline{63.0} & 63.0 & \underline{90.8} & \underline{71.3} & 69.2 & \textbf{48.0} & \textbf{93.0} & \textbf{93.1} & \textbf{96.1} & \underline{93.0} & \underline{93.1} \\
  \textbf{F-GRPO (ours)}       & \textbf{36.8} & \textbf{71.3} & \textbf{71.8} & \textbf{91.6} & \textbf{78.5} & \textbf{76.5} & \textbf{48.0} & \underline{92.6} & \underline{92.6} & \underline{96.0} & \textbf{93.3} & \textbf{93.3} \\
  \midrule
  \addlinespace[3pt]
  \multicolumn{13}{@{}l}{\textbf{Qwen3.5-2B}} \\
  Zero-shot (two-step)         & 2.2 & 4.1 & 4.8 & 5.4 & 4.5 & 4.7 & 1.0 & 1.9 & 2.1 & 2.0 & 1.9 & 2.0 \\
  Zero-shot (single-step)      & 5.2 & 10.6 & 12.4 & 13.2 & 11.5 & 12.1 & 4.0 & 7.4 & 8.4 & 8.0 & 7.4 & 7.9 \\
  SFT                          & 13.5 & 29.2 & 38.0 & 34.4 & 31.2 & 35.0 & 5.2 & 15.1 & 25.3 & 10.4 & 13.2 & 18.2 \\
  \cdashline{1-13}\noalign{\vskip 2pt}
  Decoupled SFT (sel.\ only)   & 0.3 & 0.6 & 0.7 & 0.7 & 0.6 & 0.7 & 0.3 & 0.6 & 0.7 & 0.5 & 0.5 & 0.6 \\
  Decoupled SFT (rank.\ only)  & 17.8 & 37.4 & 49.8 & 43.7 & 39.0 & 44.8 & \underline{21.4} & 51.7 & 68.5 & 42.8 & 47.7 & 56.1 \\
  Decoupled SFT (full)         & 5.2 & 14.8 & 24.5 & 13.8 & 14.7 & 19.5 & 6.0 & 16.6 & 25.7 & 11.9 & 14.8 & 19.3 \\
  \cdashline{1-13}\noalign{\vskip 2pt}
  GRPO                         & \underline{36.1} & \underline{63.1} & \underline{65.2} & \textbf{90.0} & \underline{71.4} & \underline{70.6} & \textbf{48.1} & \underline{86.3} & \underline{86.5} & \underline{96.1} & \underline{88.4} & \underline{88.5} \\
  \textbf{F-GRPO (ours)}       & \textbf{36.2} & \textbf{72.7} & \textbf{73.2} & \underline{89.9} & \textbf{79.2} & \textbf{77.2} & \textbf{48.1} & \textbf{91.4} & \textbf{91.4} & \textbf{96.3} & \textbf{92.5} & \textbf{92.5} \\
  \bottomrule
  \end{tabular}%
  }
  \caption{Recall@$k$ and NDCG@$k$ on MuSiQue and HotpotQA for Qwen3-4B and Qwen3.5-2B. All values are percentages. \textbf{Bold}: best among LLM methods; \underline{Underlined}: second best.}
  \label{tab:qa_results}
\end{table}

%% file: contents/2_related_new.tex
\section{Related Work}
\label{sec:related}

\subsection{LLM-based retrieval and ranking.}
LLMs have been applied to retrieval and ranking across several paradigms.
Generative retrieval methods emit document identifiers directly~\citep{bevilacqua2022autoregressive,wang2022neural,10.1145/3653712}.
When a candidate pool is available, LLMs serve as rerankers: pointwise~\citep{nogueira-etal-2020-document}, pairwise~\citep{pradeep2021expandomonoduodesignpatterntext}, or listwise~\citep{sun-etal-2023-chatgpt,pradeep2023rankzephyreffectiverobustzeroshot,tamber2023scalingdownlittingup}.
More recent work optimizes ranking with RL: Neural PG-RANK~\citep{gao2024policygradienttraininglanguagemodels} trains a Plackett-Luce ranking policy with REINFORCE, Rank-R1~\citep{zhuang2025rankr1enhancingreasoningllmbased} applies GRPO to setwise reranking, Rank-GRPO~\citep{zhu2026rankgrpotrainingllmbasedconversational} elevates credit assignment to the rank level, and IRPO~\citep{wu2025incontext} and~\citet{huang2026listwise} extend DPO to listwise preferences.
Closest to our work are methods that couple generation and ranking: two-stage counterfactual learning~\citep{10.1145/3731120.3744583}, GeMS~\citep{10.1145/3539597.3570412}, and HiGR~\citep{pang2026higrefficientgenerativeslate}.
However, these approaches either separate the generator and ranker into distinct models or train both stages under a single shared objective, leaving no mechanism to attribute outcome quality to individual phases.
F-GRPO instead generates and ranks within a single autoregressive rollout and applies phase-specific credit assignment, so the slate generator and ranker each receive a gradient signal from their own reward.

\subsection{Reinforcement learning for LLMs.}
PPO~\citep{schulman2017proximalpolicyoptimizationalgorithms} was scaled to LLM alignment in RLHF~\citep{ouyang2022training}; subsequent work favors critic-free variants.
GRPO~\citep{shao2024deepseekmathpushinglimitsmathematical} replaces the critic with group-relative normalization and has been widely adopted for large-scale reasoning-oriented RL training.
Refinements include Dr.~GRPO~\citep{liu2025understanding} (unbiased normalization), DAPO~\citep{yu2025dapo} (decoupled clipping and token-level loss), and GVPO~\citep{zhang2025gvpogroupvariancepolicy} (provably optimal group weights).
Offline alternatives such as DPO~\citep{10.5555/3666122.3668460,wang2026scenealign,huang2025image,li2025importance,huang2025pluralistic,kveton2025active} optimize directly from pairwise preferences.
These methods target monolithic completions with a single reward; our work extends GRPO to structured rollouts~\citep{yu2025explainable,wuctrls,wu2025ocean,surana2026generatefiltercontrolreplay,wu2024decot} containing two coupled phases, each requiring its own credit assignment.
An extended discussion is in Appendix~\ref{app:related_extended}.

%% file: contents/6_conclusion.tex
\section{Conclusion}

We introduced F-GRPO, a factorized policy optimization framework that trains a single LLM to jointly generate and rank candidates by computing separate group-relative advantages for each phase.
This eliminates the gradient interference of single-reward GRPO and the distribution mismatch of decoupled pipelines, while our theoretical analysis formalizes the first-order separability of the two credit signals.
Experiments across sequential recommendation and multi-hop QA show improvements over the GRPO and decoupled baselines, with the strongest gains appearing when proposal coverage is the main bottleneck.

\textbf{LLM usage:} AI writing tools were used to assist in drafting and verifying the theoretical proofs in this paper.

%% file: contents/7_appendix.tex
\section{Theoretical Analysis}
\label{app:theory}

\subsection{Gradient Derivation for Two-Phase Sequence-Level Loss}

We derive the gradient of the two-phase loss (Eq.~\eqref{eq:combined_loss}) and show that, at the expansion point $\theta = \theta_{\mathrm{old}}$, it decomposes into two GRPO updates, each weighted solely by its own phase-specific reward signal. This formalizes the phase-specific gradient weighting claimed in Remark~\ref{rem:gradient_independence}.

\begin{proposition}[Gradient decomposition]
\label{prop:gradient_decomposition}
At $\theta = \theta_{\mathrm{old}}$, the gradient of the combined loss decomposes as
$\nabla_\theta \mathcal{L} = \nabla_\theta \mathcal{L}_{\mathrm{slate}} + \lambda \, \nabla_\theta \mathcal{L}_{\mathrm{rank}} + \beta_{\mathrm{KL}} \, \nabla_\theta D_{\mathrm{KL}}$,
where $\nabla_\theta \mathcal{L}_{\mathrm{slate}}$ depends only on the slate advantages $\hat{A}^{(i)}_{\mathrm{slate}}$ and $\nabla_\theta \mathcal{L}_{\mathrm{rank}}$ depends only on the rank advantages $\hat{A}^{(i)}_{\mathrm{rank}}$. At this expansion point, no cross-phase reward information enters either gradient term.
\end{proposition}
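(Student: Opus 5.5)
The argument is by direct differentiation. First, the combined loss in Eq.~\eqref{eq:combined_loss} is a weighted sum $\mathcal{L}_{\mathrm{slate}} + \lambda\,\mathcal{L}_{\mathrm{rank}} + \beta_{\mathrm{KL}} D_{\mathrm{KL}}$, and the weights $\lambda$ and $\beta_{\mathrm{KL}}$ are constants. So linearity of the gradient already gives the three-term form. What remains is the claim about which advantages each term depends on. Here I treat the advantages of Eq.~\eqref{eq:separate_advantages} as constants with respect to $\theta$. They are computed from rewards of rollouts sampled under $\pi_{\theta_{\mathrm{old}}}$, and the rewards are fixed functions of $(x,\tau^{(i)},\sigma^{(i)})$, so they carry no $\theta$-dependence and pass through $\nabla_\theta$ as multiplicative scalars.

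Second, I differentiate a single clipped summand of $\mathcal{L}_{\mathrm{slate}}$ at $\theta=\theta_{\mathrm{old}}$. There every ratio satisfies $\rho^{(i)}_{\tau,t}=1$, which lies strictly inside $(1-\epsilon_{\mathrm{clip}},1+\epsilon_{\mathrm{clip}})$.
\begin{itemize}
\item In a neighbourhood of $\theta_{\mathrm{old}}$, $\mathrm{clip}(\rho,\cdot)=\rho$.
\item Hence $\min(\rho\hat A,\mathrm{clip}(\rho)\hat A)=\rho\hat A$ locally, and the function is differentiable there.
\item Using $\nabla_\theta\rho^{(i)}_{\tau,t}\big|_{\theta_{\mathrm{old}}}=\nabla_\theta\log\pi_\theta(c^{(i)}_{\tau,t}\mid x,c^{(i)}_{\tau,<t})$, we obtain
\[
\nabla_\theta\mathcal{L}_{\mathrm{slate}}=-\frac1G\sum_{i=1}^G\frac{\hat A^{(i)}_{\mathrm{slate}}}{|c^{(i)}_\tau|}\sum_{t}\nabla_\theta\log\pi_\theta(c^{(i)}_{\tau,t}\mid x,c^{(i)}_{\tau,<t}).
\]
\end{itemize}
The rank term is handled identically with context $(x,\tau^{(i)})$ and weights $\hat A^{(i)}_{\mathrm{rank}}/|c^{(i)}_\sigma|$. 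The KL term contains no advantages at all.

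Third, I read off the dependence claims from these expressions. The slate gradient is a linear combination of score functions with coefficients $\hat A^{(i)}_{\mathrm{slate}}$ only, and the rank gradient uses $\hat A^{(i)}_{\mathrm{rank}}$ only. $\hat A^{(i)}_{\mathrm{slate}}$ is a function of $\{R^{(j)}_{\mathrm{slate}}\}_j$ alone, and $\hat A^{(i)}_{\mathrm{rank}}$ of $\{R^{(j)}_{\mathrm{rank}}\}_j$ alone, so no cross-phase reward enters either term. Contrasting this with Eq.~\eqref{eq:joint_gradient}, where $\hat A^{(i)}_{\mathrm{joint}}$ mixes both rewards, completes the comparison.

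The main obstacle is the clipping nondifferentiability and the status of the quantities held fixed. I would state explicitly that the claim is about coefficients, not directions: the score-function vectors for rank tokens do depend on the generated slate $\tau^{(i)}$ through conditioning, and both phases share parameters, so the gradient vectors themselves interact. I would also state the standard convention that $\theta_{\mathrm{old}}$, the sampled rollouts, and the rewards are detached. The interior-of-clip argument then makes the derivative well defined. Delimiter tokens are excluded from both sums by the masking, so no token is counted in both phases.
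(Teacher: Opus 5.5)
Your proposal is correct and follows the same route as the paper. Both arguments evaluate at $\theta=\theta_{\mathrm{old}}$, where every ratio equals one so clipping is inactive, reduce each phase loss to a REINFORCE-style score-function sum weighted only by its own group-relative advantage, and read off the dependence. Your additions make the paper's argument more precise without changing it: the neighbourhood argument that makes the clipped surrogate differentiable, the explicit convention that rollouts and rewards are detached, and the caveat that the separation holds for the scalar coefficients rather than the shared-parameter gradient directions.
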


\begin{proof}
For each rollout $i$, recall that $c^{(i)}_\tau$ and $c^{(i)}_\sigma$ denote the slate and rank content tokens, respectively.
The two-phase loss from Eq.~\eqref{eq:combined_loss} is:
\begin{equation}
\mathcal{L}(\theta) = \mathcal{L}_{\mathrm{slate}} + \lambda \, \mathcal{L}_{\mathrm{rank}} + \beta_{\mathrm{KL}} \, D_{\mathrm{KL}}(\pi_\theta \| \pi_{\mathrm{ref}}).
\end{equation}

\paragraph{Slate Phase Gradient.}
The slate loss (Eq.~\ref{eq:slate_loss}) applies the clipped surrogate with per-token ratios $\rho^{(i)}_{\tau,t} = \pi_\theta(c^{(i)}_{\tau,t} \mid x, c^{(i)}_{\tau,<t}) / \pi_{\theta_{\mathrm{old}}}(c^{(i)}_{\tau,t} \mid x, c^{(i)}_{\tau,<t})$ and a uniform advantage $\hat{A}^{(i)}_{\mathrm{slate}}$ across all slate tokens:
\begin{equation}
\mathcal{L}_{\mathrm{slate}} = -\frac{1}{G} \sum_{i=1}^{G} \frac{1}{|c^{(i)}_\tau|} \sum_{t=1}^{|c^{(i)}_\tau|}
\min\!\bigl( \rho^{(i)}_{\tau,t}\, \hat{A}^{(i)}_{\mathrm{slate}},\; \mathrm{clip}(\rho^{(i)}_{\tau,t},\, 1{-}\epsilon_{\mathrm{clip}},\, 1{+}\epsilon_{\mathrm{clip}})\, \hat{A}^{(i)}_{\mathrm{slate}} \bigr).
\end{equation}

At $\theta = \theta_{\mathrm{old}}$, all ratios $\rho^{(i)}_{\tau,t} = 1$, so the clipping is inactive and the gradient simplifies to:
\begin{align}
\nabla_\theta \mathcal{L}_{\mathrm{slate}} \big|_{\theta=\theta_{\mathrm{old}}}
&= -\frac{1}{G} \sum_{i=1}^{G} \frac{\hat{A}^{(i)}_{\mathrm{slate}}}{|c^{(i)}_\tau|} \sum_{t=1}^{|c^{(i)}_\tau|} \nabla_\theta \log \pi_\theta(c^{(i)}_{\tau,t} \mid x, c^{(i)}_{\tau,<t}).
\end{align}
This is the standard REINFORCE gradient with phase-specific group-relative advantages applied uniformly to all tokens in the slate, with per-rollout length normalization.

\paragraph{Rank Phase Gradient.}
Analogously, the rank loss (Eq.~\ref{eq:rank_loss}) uses per-token ratios $\rho^{(i)}_{\sigma,t}$ conditioned on the prompt augmented with the generated slate $(x, \tau^{(i)})$:
\begin{align}
\nabla_\theta \mathcal{L}_{\mathrm{rank}} \big|_{\theta=\theta_{\mathrm{old}}}
&= -\frac{1}{G} \sum_{i=1}^{G} \frac{\hat{A}^{(i)}_{\mathrm{rank}}}{|c^{(i)}_\sigma|} \sum_{t=1}^{|c^{(i)}_\sigma|} \nabla_\theta \log \pi_\theta(c^{(i)}_{\sigma,t} \mid x, \tau^{(i)}, c^{(i)}_{\sigma,<t}).
\end{align}

\paragraph{Independence of reward signals.}
The slate gradient is weighted solely by $\hat{A}^{(i)}_{\mathrm{slate}}$, computed from $R_{\mathrm{slate}}$ (Section~\ref{sec:problem}), and the rank gradient is weighted solely by $\hat{A}^{(i)}_{\mathrm{rank}}$, computed from $R_{\mathrm{rank}}$ (Section~\ref{sec:problem}).
Although both gradients update shared parameters $\theta$, the advantage weighting ensures that each phase's update direction is determined by its own reward alone.
The slate is optimized purely for coverage quality and the ranker purely for ordering quality, with no cross-phase interference.
This completes the proof.
\end{proof}

\paragraph{Relation to standard GRPO.}
Each phase independently instantiates the GRPO objective from \citet{shao2024deepseekmathpushinglimitsmathematical} (Eq.~\ref{eq:grpo_objective}).
The combined loss $\mathcal{L} = \mathcal{L}_{\mathrm{slate}} + \lambda \mathcal{L}_{\mathrm{rank}}$ is therefore equivalent to running two GRPO optimizations on shared model parameters, where $\lambda$ controls the relative importance of ranking quality.
The key difference from standard GRPO is that each optimization operates on a \emph{different token subsequence} of the same rollout and uses a \emph{different reward function}, whereas standard GRPO applies a single reward uniformly to all tokens.

\input{fig/algo}
\section{Dataset Details}
\label{app:experiments}

\subsection{Dataset Statistics}
\begin{table}[h]
\centering
\begin{tabular}{lcccc}
\toprule
Dataset & Train & Val & Test & Gold Count \\
\midrule
\multicolumn{5}{c}{\textit{Recommendation}} \\
MovieLens & 10,000 & 200 & 2,000 & 1 item / 20 candidates \\
LastFM & 10,000 & 200 & 2,000 & 1 item / 20 candidates \\
\midrule
\multicolumn{5}{c}{\textit{Multi-Hop Question Answering}} \\
HotpotQA & 10,000 & 200 & 2,000 & 2 passages / 20 candidates \\
MuSiQue & 10,000 & 200 & 2,000 & 2--4 passages / 20 candidates \\
\bottomrule
\end{tabular}
\caption{Dataset statistics. Counts reflect subsampled splits used in experiments.}
\label{tab:dataset_stats}
\end{table}

\subsection{Dataset Formulation}
\label{app:dataset_details}

We describe how each dataset is adapted for factorized list-and-rank training. All datasets follow the same structure: a context $x$, a candidate pool of 20 items, binary relevance labels, and a task-specific prompt. Candidates are shuffled to prevent positional shortcuts.

\paragraph{MovieLens.}
Each sample provides a user history of 10 previously watched movies and a candidate pool of 20 movies (1 true next movie + 19 distractors). Movie titles are normalized to handle year suffixes (e.g., ``Toy Story (1995)'' $\to$ ``Toy Story'').

\paragraph{LastFM.}
Each sample provides a variable-length user history (18--220 artists, mean ${\sim}125$) and 20 candidates (1 gold + 19 distractors). The prompt template is identical to MovieLens, with ``items'' referring to artist names instead of movie titles.

\paragraph{HotpotQA.}
Each sample provides a multi-hop question and 20 candidate passages (exactly 2 gold supporting passages + 18 distractors), constructed from the HotpotQA distractor split~\citep{yang-etal-2018-hotpotqa}. Passages are identified by abstract IDs (e.g., P5, P15) that serve the same role as item names in recommendation. The task requires selecting and ranking the two bridge passages needed to answer the question.

\paragraph{MuSiQue.}
Each sample provides a compositional multi-hop question~\citep{trivedi-etal-2022-musique} and 20 candidate passages (2--4 gold supporting passages + 16--18 distractors). Passages are shuffled and assigned abstract IDs (P1--P20). MuSiQue provides official train (19,938) and validation (2,417) splits but no separate test split. In our experiments, we subsample 10,000 training examples from the train split and construct disjoint validation and test subsets from the original validation split: 200 examples for validation and 2,000 for testing.

\section{Implementation Details}
\label{app:impl_details}

This section describes the key implementation choices that bridge the formal objective (Section~\ref{sec:method}) to a working training pipeline.

\subsection{Structured Output Format}

All tasks use the same structured output format with XML-style delimiter tags.
The model generates two segments in a single autoregressive pass:

\paragraph{Recommendation.}
Items are full names (movie/artist titles); the rank reorders by predicted preference:
\begin{verbatim}
<SLATE>Item A, Item B, Item C, Item D</SLATE>
<RANK>Item C, Item A, Item D, Item B</RANK>
\end{verbatim}

\paragraph{Question answering.}
Items are passage IDs; the rank reorders by relevance to the query:
\begin{verbatim}
<SLATE>P2, P5, P8</SLATE>
<RANK>P5, P2, P8</RANK>
\end{verbatim}

During parsing, we extract content within tags and apply task-specific normalization for reward computation.

\subsection{Delimiter Masking}
\label{app:delimiter_masking}

The two-phase loss (Eq.~\eqref{eq:combined_loss}) requires that delimiter tokens participate in the forward pass (so that content tokens are conditioned on the correct prefix) but are excluded from the loss.
We implement this via \emph{position-based masking}: for each delimiter pair (e.g., \texttt{<SLATE>}, \texttt{</SLATE>}), we cache the opening token count $N_{\mathrm{open}}$ and closing token count $N_{\mathrm{close}}$ at initialization and zero out the loss for the first $N_{\mathrm{open}}$ and last $N_{\mathrm{close}}$ positions of each segment.
Letting $T_{\mathrm{seg}}$ denote the total number of tokens in a segment (including delimiters), the effective segment loss is $L_{\mathrm{seg}} = \sum_{t=N_{\mathrm{open}}+1}^{T_{\mathrm{seg}}-N_{\mathrm{close}}} \ell_t$.

\subsection{Malformed Output Handling}
\label{app:three_case}

During training, the model may produce outputs missing one or both delimiter tags. We classify each rollout into one of three cases and assign rewards accordingly:

\begin{enumerate}
    \item \textbf{Case 1} (both \texttt{<SLATE>} and \texttt{<RANK>} present): Normal two-phase processing with computed rewards for both phases.
    \item \textbf{Case 2} (only \texttt{<SLATE>} present): Slate reward is computed normally; the rank phase receives a format penalty $p_{\mathrm{fmt}} = -1.0$, applied to the raw continuation after \texttt{</SLATE>} (or to the full completion if generation stops immediately).
    \item \textbf{Case 3} (\texttt{<SLATE>} missing): Both phases receive $p_{\mathrm{fmt}}$, and the slate loss is applied to the full raw completion without delimiter masking.
\end{enumerate}

This classification determines the masking configuration for each rollout. Rollouts within the same case share the same delimiter configuration and are batched together. In practice, after SFT warm-start, nearly all rollouts are Case~1.

\subsection{Rank Subset Validation}

The ranker should reorder items from its own slate; introducing new items would violate the factorized conditioning $\pi^{\mathrm{rank}}_\theta(\sigma \mid x, \tau)$. We enforce this by checking whether every ranked item appears in the unique slate set $\mathrm{uniq}(\tau)$. If not, the rank reward is replaced with the format penalty $p_{\mathrm{fmt}}$, providing a negative learning signal against out-of-slate hallucination.

\section{Baseline Details}
\label{app:baselines}

We provide full descriptions of all baselines to facilitate reproduction. All baselines operate on the same candidate pools, data splits, and evaluation metrics as F-GRPO.

\subsection{Recommendation Baselines}
\label{app:rec_baselines}

All recommendation baselines operate on the same 20-candidate ranking task (1 gold + 19 distractors per sample) with identical evaluation (10,000 train / 2,000 test; 200 validation samples held out for model selection).

\paragraph{Random.}
Candidates are uniformly permuted and the top-5 returned. We average over 5 seeds for stable estimates.

\paragraph{Popularity.}
Item frequencies are counted from training answers with Laplace smoothing: $\mathrm{score}(i) = (\mathrm{count}(i) + 1) / (\mathrm{total} + |\mathcal{V}|)$, where $|\mathcal{V}|$ is the number of unique items in the training set.
Candidates are ranked by score. This non-personalized baseline measures signal in item popularity alone.

\paragraph{BM25~\citep{10.1561/1500000019}.}
The user's interaction history is concatenated into a query string. Each candidate item name is treated as a document and scored with BM25Okapi.
This tests whether surface-level lexical overlap between history and candidate names is informative.

\paragraph{GRU4Rec~\citep{hidasi2016sessionbasedrecommendationsrecurrentneural}.}
GRU-based sequential model following the official implementation.
Architecture: constrained embedding (shared input/output weights), 1-layer GRU (100 hidden).
Trained for 100 epochs (Adagrad, lr 0.05, batch 64) with cross-entropy loss and popularity-weighted negative sampling ($\alpha{=}0.5$).
Best checkpoint selected by validation NDCG@5 every 10 epochs.
Adapted from session-parallel batching to standard batching (our samples are independent histories).

\paragraph{UniSRec~\citep{10.1145/3534678.3539381}.}
Uses BERT-base CLS token embeddings (768D) as item representations, processed through a Mixture-of-Experts adaptor (8 experts, 768$\to$300) and a 2-layer SASRec-style encoder.
\emph{Zero-shot}: The pretrained checkpoint (\texttt{UniSRec-FHCKM-300}, trained on 5 Amazon domains) is applied directly without fine-tuning.
\emph{Fine-tuned}: The MoE adaptor is updated via inductive fine-tuning on the target domain (encoder frozen), trained for 300 epochs with early stopping (patience 10) on validation NDCG@10.
Data is converted to RecBole format for fine-tuning.

\subsection{QA Reranking Baselines}
\label{app:qa_baselines}

For the QA tasks (HotpotQA, MuSiQue), we include four reranking-specialized baselines, all applied \emph{zero-shot} (no task-specific training). These baselines are rerankers only: they reorder all given passages without an explicit selection phase. In contrast, our factorized approach first selects a subset via \texttt{<SLATE>}, then ranks within it via \texttt{<RANK>}. We report only ranker metrics for all rerankers (no slate metrics). All models are accessed via the \texttt{rank\_llm} library~\citep{10.1145/3726302.3730331}~(v0.21.0).

\paragraph{RankZephyr~\citep{pradeep2023rankzephyreffectiverobustzeroshot}.}
\label{app:rankzephyr}
A 7B model built on Zephyr-7B-$\beta$ (Mistral-based), fine-tuned for listwise reranking via knowledge distillation from GPT-3.5 and GPT-4 on 100K MS MARCO queries. All candidate passages are formatted into a listwise prompt template, and the model outputs a reordered list of identifiers (e.g., \texttt{[2] > [5] > [1] > ...}). Since both candidate pools fit within the model's window size, each query is reranked in a single forward pass. Checkpoint: \texttt{castorini/rank\_zephyr\_7b\_v1\_full}.

\paragraph{MonoT5~\citep{nogueira-etal-2020-document}.}
\label{app:monot5}
A 3B T5-based model fine-tuned on MS MARCO with a pointwise relevance objective: given a query-passage pair, the model generates ``true'' or ``false,'' and the softmax probability of ``true'' serves as the relevance score. Each candidate is scored independently ($N$ forward passes per query), then passages are ranked by descending score. Checkpoint: \texttt{castorini/monot5-3b-msmarco-10k}; context size 512, batch size 32.

\paragraph{DuoT5~\citep{pradeep2021expandomonoduodesignpatterntext}.}
\label{app:duot5}
A 3B T5-based model from the Expando-Mono-Duo framework, fine-tuned on MS MARCO with a pairwise relevance objective. For each query, all $\binom{N}{2} = 190$ candidate pairs are compared ($N{=}20$ for both datasets), and the final ranking is by descending total wins. Checkpoint: \texttt{castorini/duot5-3b-msmarco-10k}; context size 512.

\paragraph{LiT5~\citep{tamber2023scalingdownlittingup}.}
\label{app:lit5}
A T5-based model (${\sim}$3B parameters) that uses Fusion-in-Decoder (FiD) for listwise reranking: all candidates are encoded independently and fused during decoding, enabling joint comparison when producing the output permutation. The Distill variant is distilled from GPT-3.5/4 on MS MARCO. Checkpoint: \texttt{castorini/LiT5-Distill-xl}; context size 300, window size 20.

\paragraph{Summary of reranking paradigms.}
Table~\ref{tab:reranker_comparison} summarizes the key differences among the four rerankers.

\begin{table}[h]
\centering
\small
\begin{tabular}{lcccc}
\toprule
Method & Paradigm & Size & Complexity & Training \\
\midrule
MonoT5 & Pointwise & 3B & $O(N)$ passes & MS MARCO \\
DuoT5 & Pairwise & 3B & $O(N^2)$ passes & MS MARCO \\
RankZephyr & Listwise & 7B & 1 pass & MS MARCO (distilled) \\
LiT5 & Listwise (FiD) & 3B & 1 pass & MS MARCO (distilled) \\
\midrule
GRPO & Generative & 4B & 1 pass & In-domain (RL) \\
F-GRPO (ours) & Generative (factorized) & 4B & 1 pass & In-domain (RL) \\
\bottomrule
\end{tabular}
\caption{Comparison of QA reranking methods. $N$ is the number of candidate passages. Dedicated rerankers are trained on MS MARCO and applied zero-shot; our methods are trained in-domain with RL.}
\label{tab:reranker_comparison}
\end{table}

\subsection{LLM Baselines}
\label{app:llm_baselines}

\paragraph{Recommendation-side baselines.}
For recommendation, the compared LLM baselines correspond to prior paradigms: Zero-shot (two-step) follows staged prompting that separates candidate generation from ranking~\citep{wang2023zeroshotnextitemrecommendationusing}; Zero-shot (single-step) treats the model as a direct black-box generator/ranker~\citep{10.1007/978-3-031-56060-6_24}; the baseline SFT follows supervised generative recommendation setups~\citep{10.1145/3705728}; Decoupled SFT mirrors two-stage pipelines with separately trained selector and ranker modules~\citep{yue2023llamarectwostagerecommendationusing}; and GRPO uses the Dr.~GRPO variant~\citep{liu2025understanding}.

\paragraph{GRPO baseline.}
Identical training setup to F-GRPO (same SFT warm-start, same $G{=}8$ rollouts, same Dr.~GRPO normalization) but uses a single combined reward $R^{(i)}_{\mathrm{joint}} = R^{(i)}_{\mathrm{slate}} + \lambda\,R^{(i)}_{\mathrm{rank}}$ applied uniformly to all tokens.
This is the direct ablation of factorized credit assignment: everything is the same except the advantage computation collapses both phases into one signal.
Zero-shot and SFT baselines are described in Appendix~\ref{app:zeroshot_prompts} and~\ref{app:sft_details}, respectively.

\section{Training Details}
\label{app:training_details}

\subsection{SFT Warm-Start}
\label{app:sft_details}

All RL methods (GRPO, F-GRPO) are initialized with a supervised fine-tuning (SFT) warm-start. For Qwen3.5-2B on QA tasks, RL training starts from the pretrained model directly.
Because our tasks require selecting items from a fixed candidate pool and producing structured \texttt{<SLATE>}/\texttt{<RANK>} output, the pretrained model has a high malformed-output rate without SFT (missing tags, hallucinated item names), leading to sparse rewards that slow RL convergence.
Across tasks, supervision is used only to construct SFT targets and compute rewards during training; inference receives only the prompt and candidate pool.

\subsubsection{Factorized SFT (Ours)}

The factorized SFT trains on gold completions with both \texttt{<SLATE>} and \texttt{<RANK>} tags.
Given a sample with gold item(s) $\mathcal{G}$ and distractor pool $\mathcal{N}$:
\begin{enumerate}
    \item Set slate size $n = 10$ (fixed, matching the output size cap \texttt{max\_slate\_items}).
    \item Construct slate: all gold items $\mathcal{G}$ plus $\min(n - |\mathcal{G}|, |\mathcal{N}|)$ randomly sampled distractors, then shuffle.
    \item Construct rank: gold items first (shuffled), then distractors (shuffled), truncated to \texttt{max\_rank\_items} (default 5).
\end{enumerate}

\begin{tcolorbox}[colback=gray!5,colframe=gray!75,title=Example Gold Completion (Recommendation --- Factorized)]
\small
\texttt{<SLATE>Dist.~F, Dist.~B, \textbf{Gold}, Dist.~A, Dist.~C, Dist.~H, Dist.~D, Dist.~G, Dist.~E, Dist.~I</SLATE>}\\
\texttt{<RANK>\textbf{Gold}, Dist.~F, Dist.~B, Dist.~A, Dist.~C</RANK>}
\end{tcolorbox}

\subsubsection{Baseline SFT}

The baseline SFT trains on gold completions with only the \texttt{<RANK>} tag:

\begin{tcolorbox}[colback=gray!5,colframe=gray!75,title=Example Gold Completion (Recommendation --- Baseline)]
\small
\texttt{<RANK>\textbf{Gold}, Distractor A, Distractor B, Distractor C, Distractor D</RANK>}
\end{tcolorbox}

Gold items are listed first, followed by randomly sampled distractors, truncated to 5 candidates.

\subsubsection{Decoupled SFT}
\label{app:decoupled_sft}

The decoupled SFT baseline trains \emph{separate specialist models} for selection and ranking, then chains them at inference via two sequential LLM calls.
This isolates the benefit of joint training within a shared backbone (as in F-GRPO) versus independently optimized specialists.

\paragraph{Selector training.}
The selector is trained on \texttt{<SLATE>}-only gold completions using the Step-1 prompts from the two-step zero-shot baseline (Section~\ref{app:zeroshot_prompts}).
Gold slates contain all gold items plus randomly sampled distractors (shuffled), matching the factorized SFT slate construction.

\paragraph{Ranker training.}
The ranker is trained on \texttt{<RANK>}-only gold completions using Step-2 prompts conditioned on gold slates.
Gold ranks list gold items first (shuffled), then distractors (shuffled), truncated to 5 candidates.
Critically, the ranker always sees \emph{gold slates} during training.

\paragraph{Inference.}
At inference, the selector generates a \texttt{<SLATE>}, which is parsed and fed as context to the ranker's Step-2 prompt.
The ranker then generates a \texttt{<RANK>}.

\paragraph{Three variants.}
\begin{itemize}
    \item \textbf{Selector only}: Trained selector + base model ranker (zero-shot).
    \item \textbf{Ranker only}: Base model selector (zero-shot) + trained ranker.
    \item \textbf{Full}: Both selector and ranker trained independently.
\end{itemize}
This baseline applies to recommendation and QA tasks. Both selector and ranker use \emph{best eval loss} checkpoint selection, matching the other SFT baselines.

\subsubsection{SFT Hyperparameters}

All SFT variants share the same training recipe, summarized in Table~\ref{tab:sft_hparams}.

\begin{table}[t]
\centering
\small
\begin{tabular}{l l}
\toprule
\multicolumn{2}{c}{\textbf{Training Setup}} \\
\midrule
Learning rate & $2 \times 10^{-5}$ (3\% warmup) \\
Epochs & 1 \\
Optimizer & AdamW \\
Precision & BF16 \\
Batch size & 2 $\times$ 8 accumulation (16) \\
Max sequence length & 2048 \\
Max completion length & 256 (Rec/QA) \\
Training samples (Recommendation, MuSiQue) & 10k (200 val) \\
Checkpoint selection & Best eval loss \\
\bottomrule
\end{tabular}
\caption{Shared SFT training hyperparameters for baseline, factorized, and decoupled models.}
\label{tab:sft_hparams}
\end{table}

The SFT prompt templates match the corresponding GRPO prompts exactly (factorized or baseline; see Section~\ref{app:prompts}). After SFT, the checkpoint is loaded as the starting policy for GRPO training with learning rate $5 \times 10^{-6}$ ($2 \times 10^{-6}$ for Qwen3.5-2B on QA tasks).

\subsection{GRPO Hyperparameters}
\label{app:hyperparams}

Table~\ref{tab:grpo_hparams} summarizes the GRPO training configuration.

\begin{table}[t]
\centering
\small
\begin{tabular}{l l}
\toprule
\multicolumn{2}{c}{\textbf{Training Setup}} \\
\midrule
Learning rate & $5 \times 10^{-6}$ (3\% warmup); $2 \times 10^{-6}$ for Qwen3.5-2B QA \\
Batch size & 2 $\times$ 8 accumulation (16) \\
Epochs & 1 \\
RL algorithm & Dr.~GRPO \\
Group size & 8 rollouts per prompt \\
Temperature & 1.0 \\
Top-$p$ & 0.9 \\
Max completion length & 256 (Rec/QA) \\
Clipping & $\epsilon_{\mathrm{clip}} = 0.2$ \\
KL weight & $\beta_{\mathrm{KL}} = 0.0$ \\
Ranking trade-off & $\lambda = 1.0$ \\
Training samples & 10k \\
Output size caps & rec/QA: \texttt{max\_slate\_items} = 10, \texttt{max\_rank\_items} = 5 \\
Oversize penalty & $-0.5$ per phase \\
Evaluation & Every 50 steps on 200 val prompts; final test on 2000 samples \\
\bottomrule
\end{tabular}
\caption{Shared GRPO hyperparameters for baseline and factorized models.}
\label{tab:grpo_hparams}
\end{table}

Following recent works~\citep{yu2025dapo,liu2025understanding,hu2025openreasonerzero}, we set $\beta_{\mathrm{KL}} = 0$.

Evaluation uses greedy decoding ($T{=}0$, no sampling) with $G_{\mathrm{eval}}{=}1$ (deterministic single-prediction evaluation) for all tasks.

\subsection{Ablation Results}
\label{app:ablation_results}

\paragraph{Hyperparameter sensitivity.}
Figure~\ref{fig:hyperparameter_ablation}a shows that $\lambda{=}1.0$ yields the best or near-best performance across all metrics; underweighting the ranking loss ($\lambda{=}0.5$) consistently degrades top-position quality, confirming that the ranker needs sufficient gradient signal.
Figure~\ref{fig:hyperparameter_ablation}b varies the maximum slate size $n \in \{5, 10, 15\}$. All metrics peak at $n{=}10$: $n{=}5$ limits candidate coverage, while $n{=}15$ dilutes the pool with low-relevance items, slightly degrading Precision@1 and NDCG@1. We use $\lambda{=}1.0$ and $n{=}10$ as defaults.

\begin{figure}[h]
  \centering
  \begin{subfigure}[b]{\linewidth}
    \centering
    \includegraphics[width=0.8\linewidth]{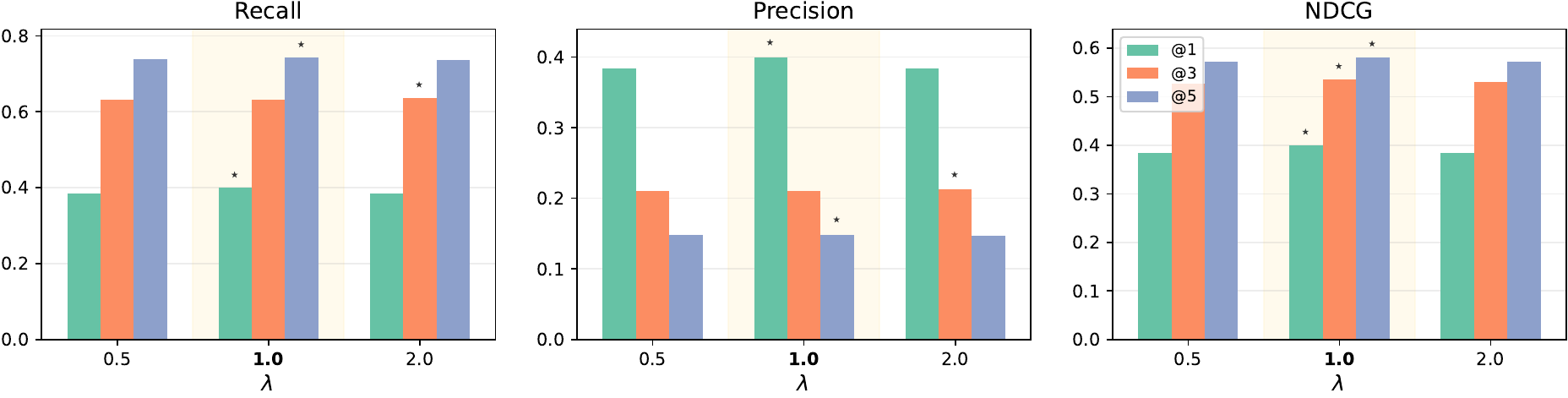}
    \caption{Effect of $\lambda$ on LastFM (Qwen3.5-2B). Stars mark the best setting per cutoff.}
    \label{fig:lambda_ablation}
  \end{subfigure}
  \vspace{0.3em}
  \begin{subfigure}[b]{\linewidth}
    \centering
    \includegraphics[width=0.7\linewidth]{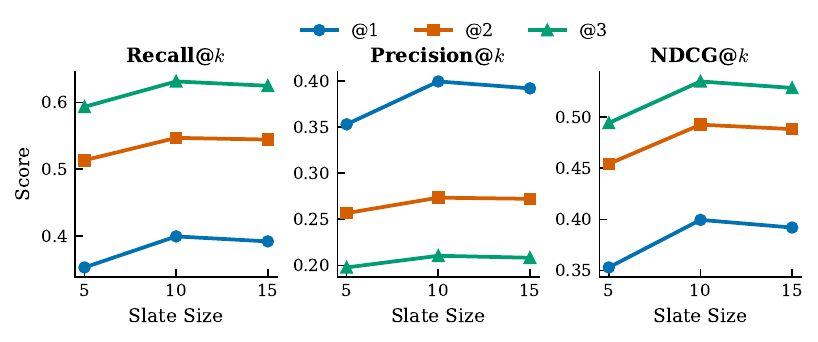}
    \caption{Effect of slate size on LastFM (Qwen3.5-2B).}
    \label{fig:slate_size}
  \end{subfigure}
  \caption{Hyperparameter sensitivity for F-GRPO. (a) $\lambda{=}1.0$ yields the best or near-best performance across all metrics; underweighting the ranking loss ($\lambda{=}0.5$) degrades top-position quality. (b) Performance peaks at slate size $n{=}10$; smaller slates limit coverage while larger slates dilute the candidate pool.}
  \label{fig:hyperparameter_ablation}
\end{figure}

\section{Additional Results}
\label{app:eval_metrics}
\label{app:results}

The main text reports Recall@$k$ and NDCG@$k$. For completeness, we include the remaining metrics here.
Table~\ref{tab:precision} reports Precision@$k$ for recommendation (LastFM and MovieLens).
\input{fig/results_rec_precision_appendix}

Table~\ref{tab:qa_precision_hit} reports Precision@$k$ and Hit@$k$ for multi-hop QA (MuSiQue and HotpotQA). The trends are consistent with those reported in the main text: F-GRPO achieves the best or second-best performance among LLM-based methods across all cutoffs.
\input{fig/results_qa_appendix}

\subsection{Optimization dynamics.}
\label{app:optimization_dynamics}
\Cref{fig:training_curves} compares F-GRPO and GRPO evaluation metrics over training on Qwen3-4B.
F-GRPO converges to higher values on most metrics on both datasets, with the clearest gains at higher cutoffs.
The advantage is most pronounced on Recall@5 and NDCG@5, where the factorized slate's broader coverage compounds with ranking quality: on MovieLens, F-GRPO reaches Recall@5${\approx}$0.70 while GRPO plateaus at${\approx}$0.60.
F-GRPO also converges faster, achieving strong performance within the first 1,000 steps on LastFM, while GRPO requires${\approx}$2,000 steps to reach comparable Recall@1.
This faster convergence is consistent with the credit-assignment analysis in Section~\ref{sec:method}: by eliminating the cross-phase gradient contamination identified in~\Cref{eq:joint_advantage}, each phase receives a cleaner learning signal, reducing the number of updates needed to reach high reward.
\begin{figure*}[t]
  \centering
  \includegraphics[width=\textwidth]{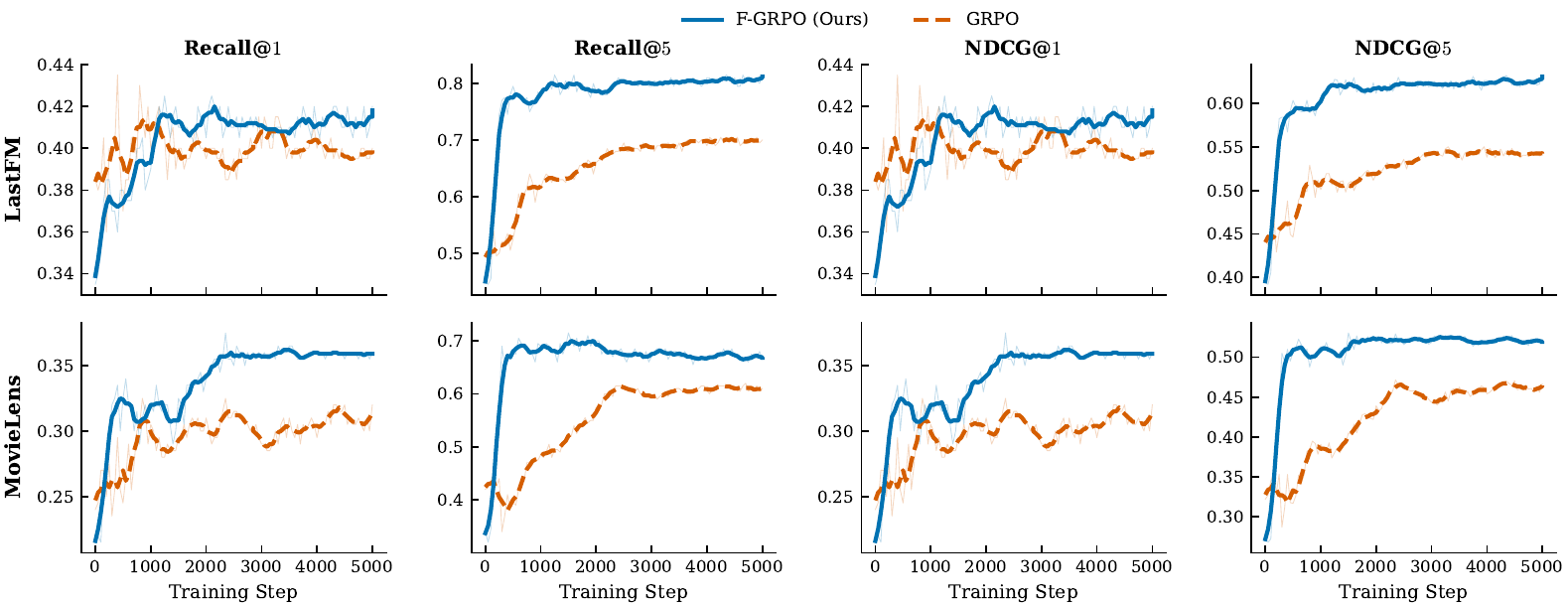}
  \caption{Evaluation metrics during training for F-GRPO and GRPO on Qwen3-4B. F-GRPO converges faster and reaches higher values on most metrics, with the clearest gains at higher cutoffs. Faint lines show raw values, bold lines are smoothed.}
  \label{fig:training_curves}
\end{figure*}

\subsection{Error attribution}
\label{app:error_attribution}

We decompose F-GRPO's test errors into two failure modes: \emph{slate miss} (the gold item never appears in the slate) and \emph{rank drop} (the gold item appears in the slate but the ranker fails to surface it).

On both datasets, errors are distributed across both phases rather than dominated by one, validating the factorized design. If errors were concentrated in a single phase, phase-specific advantages would offer little benefit over GRPO.

\begin{table}[t]
\centering
\small
\begin{tabular}{lccc}
\toprule
Dataset & Slate miss & Rank drop & Success \\
\midrule
LastFM & 8.9\% & 9.4\% & 81.8\% \\
MovieLens & 17.9\% & 15.4\% & 66.6\% \\
\bottomrule
\end{tabular}
\caption{Error attribution for F-GRPO (Qwen3-4B). Errors are distributed across both phases rather than dominated by one, confirming that each requires its own credit signal.}
\label{tab:error_attribution}
\end{table}

\section{Representative Examples}
\label{app:qa_examples}

We provide two representative HotpotQA examples from Qwen3-4B. For readability, we show passage titles rather than abstract IDs (P1--P20).

\begin{tcolorbox}[colback=gray!5,colframe=gray!75,title=HotpotQA Example 1 (Qwen3-4B)]
\small
\textbf{Question:} Shahnoza Nazirova traveled to Icheon and Ansan, South Korea to compete in what sport and event?\\[0.3em]
\textbf{Gold truth:} Shahnoza Nazirova; Volleyball at the 2014 Asian Games -- Women\\[0.3em]
\textbf{F-GRPO \texttt{<SLATE>}:} \texttt{Shahnoza Nazirova; Volleyball at the 2014 Asian Games -- Women; Icheon}\\
\textbf{F-GRPO \texttt{<RANK>}:} \texttt{Shahnoza Nazirova; Volleyball at the 2014 Asian Games -- Women}\\
\textbf{Baseline \texttt{<RANK>}:} \texttt{Icheon; Shahnoza Nazirova; Ansan}\\[0.3em]
\end{tcolorbox}

\begin{tcolorbox}[colback=gray!5,colframe=gray!75,title=HotpotQA Example 2 (Qwen3-4B)]
\small
\textbf{Question:} What Russian city are both Mariinsky Ballet and Arthur Saint-L\'eon located?\\[0.3em]
\textbf{Gold truth:} Mariinsky Ballet; Arthur Saint-L\'eon\\[0.3em]
\textbf{F-GRPO \texttt{<SLATE>}:} \texttt{Mariinsky Ballet; Mariinsky Theatre; Arthur Saint-L\'eon}\\
\textbf{F-GRPO \texttt{<RANK>}:} \texttt{Mariinsky Ballet; Arthur Saint-L\'eon}\\
\textbf{Baseline \texttt{<RANK>}:} \texttt{Mariinsky Ballet; Larissa Lezhnina; Mariinsky Theatre}\\[0.3em]
\end{tcolorbox}

\section{Prompt Templates}
\label{app:prompts}

We reproduce the full prompt templates used for all tasks. The factorized prompt instructs the model to output both a slate and a ranking in a single pass, wrapped in \texttt{<SLATE>...</SLATE>} and \texttt{<RANK>...</RANK>} tags; the baseline prompt uses only \texttt{<RANK>} tags. Both include explicit size guidance (``up to 10'' for the slate, ``top 5'' for the rank), matching the output size caps enforced during training (Appendix~\ref{app:hyperparams}).

\subsection{Recommendation (MovieLens \& LastFM)}

\subsubsection*{Factorized Mode}

\begin{tcolorbox}[colback=gray!5,colframe=gray!75,title=Recommendation Factorized Prompt Template]
\small
\textbf{System:} You are a recommendation system. Given a user's consumption history, predict which items they are most likely to select next from the provided candidates.\\[0.5em]
\textbf{User:}\\
\texttt{User history:}\\
\texttt{\{history\}}\\[0.3em]
\texttt{Candidate Items:}\\
\texttt{\{candidates\}}\\[0.3em]
Based on the user's history, predict which items they are likely to select next.\\
You MUST follow this exact format. Output ONLY the tagged blocks.\\
Rules:
\begin{itemize}
\item SLATE: Select up to \{max\_slate\} items the user would most likely choose next.
\item RANK: Rank your top \{max\_rank\} items from SLATE, from most to least likely to be their next choice.
\item Use ONLY item names from the candidates list (no paraphrases).
\end{itemize}
Output format:\\
\texttt{<SLATE>Item A, Item B, Item C, Item D</SLATE>}\\
\texttt{<RANK>Item B, Item A, Item D, Item C</RANK>}
\end{tcolorbox}

\subsubsection*{Baseline Mode}

\begin{tcolorbox}[colback=gray!5,colframe=gray!75,title=Recommendation Baseline Prompt Template]
\small
\textbf{System:} You are a recommendation system. Given a user's consumption history, predict which items they are most likely to select next from the provided candidates.\\[0.5em]
\textbf{User:}\\
\texttt{User history:}\\
\texttt{\{history\}}\\[0.3em]
\texttt{Candidate Items:}\\
\texttt{\{candidates\}}\\[0.3em]
Based on the user's history, predict which items they are likely to select next.\\
You MUST follow this exact format. Output ONLY the tagged block.\\
Rules:
\begin{itemize}
\item RANK: Select and rank the top \{max\_rank\} items from most to least likely to be their next choice.
\item Use ONLY item names from the candidates list (no paraphrases).
\end{itemize}
Output format:\\
\texttt{<RANK>Item B, Item A, Item D, Item C</RANK>}
\end{tcolorbox}

\paragraph{Note.}
For LastFM, ``items'' refer to artist names instead of movie titles. The prompt structure remains identical.

\subsection{Question Answering (HotpotQA \& MuSiQue)}

\subsubsection*{Factorized Mode}

\begin{tcolorbox}[colback=gray!5,colframe=gray!75,title=QA Factorized Prompt Template]
\small
\textbf{System:} You are an evidence selection system. Given a question and candidate passages (each with an ID and content), identify which passage IDs contain supporting evidence.\\[0.5em]
\textbf{User:}\\
\texttt{Query: \{question\}}\\[0.3em]
\texttt{Candidate Passages:}\\
\texttt{\{candidates\}}\\[0.3em]
You MUST follow this exact format. Output ONLY the tagged blocks.\\
Rules:
\begin{itemize}
\item SLATE: Select up to \{max\_slate\} passage IDs that contain relevant evidence.
\item RANK: Rank your top \{max\_rank\} passage IDs from SLATE, from most to least relevant.
\item Use ONLY the passage IDs listed above (do not invent IDs).
\end{itemize}
Output format:\\
\texttt{<SLATE>P1, P2, P3</SLATE>}\\
\texttt{<RANK>P2, P1, P3</RANK>}
\end{tcolorbox}

\subsubsection*{Baseline Mode}

\begin{tcolorbox}[colback=gray!5,colframe=gray!75,title=QA Baseline Prompt Template]
\small
\textbf{System:} You are an evidence selection system. Given a question and candidate passages (each with an ID and content), identify which passage IDs contain supporting evidence.\\[0.5em]
\textbf{User:}\\
\texttt{Query: \{question\}}\\[0.3em]
\texttt{Candidate Passages:}\\
\texttt{\{candidates\}}\\[0.3em]
You MUST follow this exact format. Output ONLY the tagged block.\\
Rules:
\begin{itemize}
\item RANK: Select and rank the top \{max\_rank\} passage IDs from most to least relevant.
\item Use ONLY the passage IDs listed above (do not invent IDs).
\end{itemize}
Output format:\\
\texttt{<RANK>P2, P1, P3</RANK>}
\end{tcolorbox}

\paragraph{Note.}
Candidate passages are formatted as \texttt{**P1**} followed by title and text content. For HotpotQA, passage IDs correspond to Wikipedia abstract titles; for MuSiQue, passages are numbered P1--P20.

\section{Two-Step Zero-Shot Prompt Templates}
\label{app:zeroshot_prompts}

The two-step zero-shot baseline decomposes the list-to-rank task into two sequential LLM calls without any training: Step~1 generates a \texttt{<SLATE>} (candidate selection), and Step~2 feeds the generated slate back as context to produce a \texttt{<RANK>} (ranking).
Both steps use greedy decoding ($T{=}0$) with a single rollout per prompt.
System prompts are shared with the factorized templates above; we show only the user messages.

\subsection{Recommendation}

\begin{tcolorbox}[colback=gray!5,colframe=gray!75,title=Recommendation --- Step 1 (Slate Generation)]
\small
\texttt{User history:}\\
\texttt{\{history\}}\\[0.3em]
\texttt{Candidate Items:}\\
\texttt{\{candidates\}}\\[0.3em]
Based on the user's history, select items they are most likely to choose next.\\
You MUST follow this exact format. Output ONLY the tagged block.\\
Rules:
\begin{itemize}
\item SLATE: Select up to \{max\_slate\} items the user would most likely choose next.
\item Use ONLY item names from the candidates list (no paraphrases).
\end{itemize}
Output format:\\
\texttt{<SLATE>Item A, Item B, Item C, Item D</SLATE>}
\end{tcolorbox}

\begin{tcolorbox}[colback=gray!5,colframe=gray!75,title=Recommendation --- Step 2 (Ranking)]
\small
\texttt{User history:}\\
\texttt{\{history\}}\\[0.3em]
\texttt{Candidate Items:}\\
\texttt{\{candidates\}}\\[0.3em]
Based on the user's history, you previously selected these items as likely next choices: \{slate\_items\}\\[0.3em]
You MUST follow this exact format. Output ONLY the tagged block.\\
Rules:
\begin{itemize}
\item RANK: Rank the top \{max\_rank\} items from most to least likely.
\item Use ONLY item names from your selection (no paraphrases).
\end{itemize}
Output format:\\
\texttt{<RANK>Item B, Item A, Item D, Item C</RANK>}
\end{tcolorbox}

\subsection{Question Answering}

\begin{tcolorbox}[colback=gray!5,colframe=gray!75,title=QA --- Step 1 (Slate Generation)]
\small
\texttt{Query: \{question\}}\\[0.3em]
\texttt{Candidate Passages:}\\
\texttt{\{candidates\}}\\[0.3em]
You MUST follow this exact format. Output ONLY the tagged block.\\
Rules:
\begin{itemize}
\item SLATE: Select up to \{max\_slate\} passage IDs that contain relevant evidence.
\item Use ONLY the passage IDs listed above (do not invent IDs).
\end{itemize}
Output format:\\
\texttt{<SLATE>P1, P2, P3</SLATE>}
\end{tcolorbox}

\begin{tcolorbox}[colback=gray!5,colframe=gray!75,title=QA --- Step 2 (Ranking)]
\small
\texttt{Query: \{question\}}\\[0.3em]
\texttt{Candidate Passages:}\\
\texttt{\{candidates\}}\\[0.3em]
Based on the query, you previously selected these passage IDs as containing relevant evidence: \{slate\_items\}\\[0.3em]
You MUST follow this exact format. Output ONLY the tagged block.\\
Rules:
\begin{itemize}
\item RANK: Rank the top \{max\_rank\} passage IDs from most to least relevant.
\item Use ONLY passage IDs from your selection (do not invent IDs).
\end{itemize}
Output format:\\
\texttt{<RANK>P2, P1, P3</RANK>}
\end{tcolorbox}

\section{Extended Related Work}
\label{app:related_extended}

This section expands on the discussion in Section~\ref{sec:related}, providing additional context for each line of work.

\subsection{LLM-based Retrieval and Reranking}

Recent work on LLM-based retrieval broadly splits into generative retrieval and reranking over a fixed candidate pool. In generative retrieval, language models directly emit document identifiers rather than independently scoring a corpus: Autoregressive Search Engines~\citep{bevilacqua2022autoregressive} generates corpus-constrained n-gram substrings as identifiers using an FM-Index, Neural Corpus Indexer~\citep{wang2022neural} learns hierarchical semantic document identifiers via clustering, and ListGR~\citep{10.1145/3653712} extends the formulation with sequential, position-aware listwise training so the generation objective better matches ranked evaluation metrics.

When a candidate set is already available, LLMs have been applied as increasingly strong rerankers. MonoT5~\citep{nogueira-etal-2020-document} and the Expando-Mono-Duo framework~\citep{pradeep2021expandomonoduodesignpatterntext} showed that seq2seq models serve as effective pointwise and pairwise rerankers within classical multi-stage pipelines. Subsequent work moved toward listwise formulations. RankGPT~\citep{sun-etal-2023-chatgpt} showed that proprietary LLMs can perform zero-shot listwise reranking competitive with supervised methods, and subsequent open-weight efforts distill this capability into smaller models: RankZephyr~\citep{pradeep2023rankzephyreffectiverobustzeroshot} into a 7B model and LiT5~\citep{tamber2023scalingdownlittingup} into 220M--3B parameters via a Fusion-in-Decoder architecture.

More recent methods optimize ranking objectives directly with RL or preference learning. Neural PG-RANK~\citep{gao2024policygradienttraininglanguagemodels} casts retrieval as a Plackett-Luce ranking policy trained with REINFORCE. RaCT~\citep{10.1145/3767695.3769487} combines chain-of-thought supervised fine-tuning with ranking preference optimization. Rank-R1~\citep{zhuang2025rankr1enhancingreasoningllmbased} applies GRPO to a setwise reranker that selects the single most relevant document per call, eliciting reasoning without explicit supervision. Rank-GRPO~\citep{zhu2026rankgrpotrainingllmbasedconversational} adapts GRPO for conversational recommendation by elevating credit assignment from the token level to individual rank positions within a single list. IRPO~\citep{wu2025incontext} extends DPO to listwise ranking with a positional preference model inspired by Plackett-Luce.

Closest to our setting are methods that treat candidate generation and final ordering as coupled decisions rather than isolated stages. Two-stage counterfactual learning to rank~\citep{10.1145/3731120.3744583} jointly optimizes a candidate generator and ranker under logged feedback via alternating updates, while GeMS~\citep{10.1145/3539597.3570412} and HiGR~\citep{pang2026higrefficientgenerativeslate} optimize whole recommendation slates rather than independent items, using latent-space RL and hierarchical planning with listwise preference alignment, respectively. Our work follows this direction but differs in realizing both phases within a single autoregressive LLM rollout: the same model first constructs a slate and then ranks it, with separate learning signals for slate quality and within-slate ordering rather than a single shared sequence-level objective.

\subsection{Reinforcement Learning for LLMs}

Reinforcement learning for LLM post-training largely builds on policy-gradient methods. PPO~\citep{schulman2017proximalpolicyoptimizationalgorithms}, with its clipped surrogate objective, was scaled to LLM alignment in RLHF~\citep{ouyang2022training}, where a learned reward model guides policy updates. Recent reasoning-focused work has increasingly favored critic-free variants that avoid learning a separate value model.~\citet{shao2024deepseekmathpushinglimitsmathematical} introduced GRPO, which replaces the critic with group-relative reward normalization over multiple samples from the same prompt, making large-scale RL more practical for verifiable reasoning tasks;~\citet{guo2025deepseek} subsequently scaled this approach to frontier reasoning.

Subsequent work mainly refines how credit is assigned within this critic-free framework. Dr.\ GRPO~\citep{liu2025understanding} shows that the standard-deviation normalization in vanilla GRPO introduces response-length and question-difficulty biases, and proposes an unbiased variant that removes the denominator, improving stability and token efficiency. GVPO~\citep{zhang2025gvpogroupvariancepolicy} revisits the weighting rule more fundamentally, deriving group weights from the KL-constrained reward-maximization problem so that the intractable partition function cancels, yielding provably optimal updates. DAPO~\citep{yu2025dapo} shows that large-scale reasoning RL is highly sensitive to training recipe choices, improving long-chain-of-thought training with decoupled clipping, dynamic sampling, token-level policy loss, and overlong reward shaping. MiniMax-M1~\citep{minimax2025minimaxm1scalingtesttimecompute} introduces CISPO, which clips importance-sampling weights rather than zeroing out per-token gradient contributions, preserving entropy and stabilizing training for long generations.

Alongside these online RL methods, work on offline and alternative objectives has broadened the design space. DPO~\citep{10.5555/3666122.3668460} reparameterizes the reward under a Bradley-Terry preference model so that the partition function cancels, enabling direct optimization from pairwise preferences without an explicit reward model or RL loop. MaxRL~\citep{tajwar2026maximumlikelihoodreinforcementlearning} argues that expected-reward RL optimizes only a first-order approximation to the underlying maximum-likelihood objective over correct rollouts, and proposes a compute-indexed family of objectives that interpolate toward exact likelihood as sampling budget grows. Our method is complementary: whereas prior LLM RL methods improve the objective for a single monolithic completion, we study a structured rollout containing two coupled decisions (candidate generation and ranking) and therefore require phase-specific credit assignment rather than one shared sequence-level advantage.

%% file: fig/algo.tex
\begin{algorithm}[t]
\caption{Factorized List-and-Rank Training with Phase-Specific Group-Relative Advantages}
\label{alg:training}
\scriptsize
\begin{algorithmic}[1]
\Require Training dataset $\mathcal{D}_{\mathrm{train}}$, group size $G$, clip $\epsilon_{\mathrm{clip}}$, KL coef.\ $\beta_{\mathrm{KL}}$, trade-off $\lambda$, lr $\alpha$, temperature $T$, steps $M$, format penalty $p_{\mathrm{fmt}}$
\State Initialize policy parameters $\theta$ from SFT checkpoint; optionally fix reference $\pi_{\mathrm{ref}} \leftarrow \pi_\theta$ if $\beta_{\mathrm{KL}} > 0$
\For{$\mathrm{step}=1,\dots,M$}
  \State Sample minibatch $\mathcal{B}\subset\mathcal{D}_{\mathrm{train}}$
  \State $\theta_{\mathrm{old}} \leftarrow \theta$ \Comment{Snapshot current policy}
  \ForAll{$x\in\mathcal{B}$} \Comment{Rollout collection}
    \For{$i=1$ to $G$}
      \State Sample rollout $o^{(i)} \sim \pi_{\theta_{\mathrm{old}}}(\cdot\mid x)$ with temperature $T$
      \State Parse $o^{(i)}$ into slate content $c^{(i)}_{\tau}$ and rank content $c^{(i)}_{\sigma}$ via delimiter tags
      \If{both \texttt{<SLATE>} and \texttt{<RANK>} present} \Comment{Case 1}
        \State Compute $R^{(i)}_{\mathrm{slate}}$, $R^{(i)}_{\mathrm{rank}}$ from task-specific rewards
      \ElsIf{only \texttt{<SLATE>} present} \Comment{Case 2}
        \State Compute $R^{(i)}_{\mathrm{slate}}$; set $R^{(i)}_{\mathrm{rank}} \leftarrow p_{\mathrm{fmt}}$
      \Else \Comment{Case 3: malformed}
        \State $R^{(i)}_{\mathrm{slate}} \leftarrow p_{\mathrm{fmt}}$, $R^{(i)}_{\mathrm{rank}} \leftarrow p_{\mathrm{fmt}}$
      \EndIf
    \EndFor
    \State Compute group means $\bar{R}_{\mathrm{slate}}$ and $\bar{R}_{\mathrm{rank}}$ over $G$ rollouts
    \For{$i=1$ to $G$} \Comment{Factorized advantage estimation}
      \State $\hat{A}^{(i)}_{\mathrm{slate}} \gets R^{(i)}_{\mathrm{slate}}-\bar{R}_{\mathrm{slate}}$
      \State $\hat{A}^{(i)}_{\mathrm{rank}} \gets R^{(i)}_{\mathrm{rank}}-\bar{R}_{\mathrm{rank}}$
    \EndFor
  \EndFor
  \State \textbf{Phase 1 (Slate):} Compute per-token log-probs $\log \pi_\theta(c^{(i)}_{\tau,t} \mid x, c^{(i)}_{\tau,<t})$ for all collected slate segments
  \State \hspace{\algorithmicindent} Apply delimiter masking: loss on content tokens only (exclude \texttt{<SLATE>}, \texttt{</SLATE>})
  \State \hspace{\algorithmicindent} Compute $\mathcal{L}_{\mathrm{slate}}$ via per-token clipped loss (Eq.~\eqref{eq:slate_loss}) with $\hat{A}^{(i)}_{\mathrm{slate}}$
  \State \textbf{Phase 2 (Rank):} Compute per-token log-probs $\log \pi_\theta(c^{(i)}_{\sigma,t} \mid x, \tau^{(i)}, c^{(i)}_{\sigma,<t})$ for Case~1 rank segments and Case~2 post-slate continuations
  \State \hspace{\algorithmicindent} Apply delimiter masking for Case~1; use the raw continuation without delimiter masking for Case~2
  \State \hspace{\algorithmicindent} Compute $\mathcal{L}_{\mathrm{rank}}$ via per-token clipped loss (Eq.~\eqref{eq:rank_loss}) with $\hat{A}^{(i)}_{\mathrm{rank}}$
  \State $\mathcal{L}(\theta) \gets \mathcal{L}_{\mathrm{slate}} + \lambda\,\mathcal{L}_{\mathrm{rank}} + \beta_{\mathrm{KL}}\,D_{\mathrm{KL}}(\pi_\theta \| \pi_{\mathrm{ref}})$
  \State $\theta \leftarrow \theta - \alpha \nabla_\theta \mathcal{L}(\theta)$ \Comment{Single gradient step per batch}
\EndFor
\State \Return Trained policy $\pi_\theta$
\end{algorithmic}
\end{algorithm}

%% file: fig/results_rec_precision_appendix.tex
\begin{table}[t]
  \centering
  \resizebox{0.5\textwidth}{!}{%
  \begin{tabular}{@{}lccc@{\hspace{0.8em}}!{\vrule width 0.3pt}@{\hspace{0.8em}}ccc@{}}
  \toprule
  & \multicolumn{3}{c}{LastFM} & \multicolumn{3}{c}{MovieLens} \\
  \cmidrule(lr){2-4} \cmidrule(lr){5-7}
  Method & @1 & @3 & @5 & @1 & @3 & @5 \\
  \midrule
  Random & 5.0 & 5.0 & 5.0 & 4.9 & 5.1 & 5.0 \\
  Popularity & 24.7 & 15.3 & 11.3 & 14.4 & 12.1 & 11.0 \\
  BM25 & 6.2 & 5.9 & 5.5 & 8.7 & 6.1 & 5.5 \\
  GRU4Rec & 50.8 & 23.0 & 15.1 & 21.8 & 15.6 & 12.0 \\
  UniSRec (zero-shot) & 11.0 & 8.5 & 7.4 & 6.3 & 5.5 & 5.1 \\
  UniSRec (fine-tuned) & 23.9 & 14.1 & 10.9 & 13.6 & 9.9 & 8.7 \\
  \addlinespace[3pt]
  \midrule
  \multicolumn{7}{@{}l}{\textbf{Qwen3-4B}} \\
  Zero-shot (two-step) & 18.8 & 14.9 & 11.7 & 12.1 & 9.7 & 8.0 \\
  Zero-shot (single-step) & 20.6 & 14.6 & 11.2 & 11.9 & 8.9 & 7.8 \\
  SFT & 40.1 & 15.7 & 10.8 & 21.5 & 9.8 & 7.3 \\
  \cdashline{1-7}\noalign{\vskip 2pt}
  Decoupled SFT (sel.\ only) & 14.1 & 11.1 & 9.2 & 10.1 & 8.4 & 7.2 \\
  Decoupled SFT (rank.\ only) & 28.8 & 14.3 & 10.4 & 16.5 & 9.3 & 7.4 \\
  Decoupled SFT (full) & 33.4 & 13.5 & 9.2 & 19.7 & 9.3 & 7.1 \\
  \cdashline{1-7}\noalign{\vskip 2pt}
  GRPO & \underline{44.7} & \underline{18.4} & \underline{14.8} & \underline{23.9} & \underline{14.8} & \underline{11.5} \\
  \textbf{F-GRPO (ours)} & \textbf{46.8} & \textbf{24.1} & \textbf{16.3} & \textbf{26.9} & \textbf{17.9} & \textbf{13.3} \\
  \midrule
  \addlinespace[3pt]
  \multicolumn{7}{@{}l}{\textbf{Qwen3.5-2B}} \\
  Zero-shot (two-step) & 0.6 & 0.5 & 0.5 & 1.2 & 0.6 & 0.4 \\
  Zero-shot (single-step) & 3.1 & 2.2 & 1.6 & 3.3 & 1.5 & 1.1 \\
  SFT & 37.9 & 15.1 & 10.5 & \textbf{27.7} & 11.6 & 8.2 \\
  \cdashline{1-7}\noalign{\vskip 2pt}
  Decoupled SFT (sel.\ only) & 0.5 & 0.4 & 0.4 & 0.7 & 0.6 & 0.5 \\
  Decoupled SFT (rank.\ only) & 23.1 & 10.5 & 7.7 & 12.3 & 6.9 & 5.4 \\
  Decoupled SFT (full) & 32.4 & 13.4 & 9.2 & 23.6 & 10.5 & 8.1 \\
  \cdashline{1-7}\noalign{\vskip 2pt}
  GRPO & \underline{38.7} & \underline{20.1} & \underline{14.3} & \underline{24.4} & \textbf{16.9} & \underline{11.3} \\
  \textbf{F-GRPO (ours)} & \textbf{40.0} & \textbf{21.1} & \textbf{14.9} & 21.5 & \underline{15.3} & \textbf{12.2} \\
  \bottomrule
  \end{tabular}%
  }
  \caption{Precision@$k$ on LastFM and MovieLens for Qwen3-4B and Qwen3.5-2B. All values are percentages. \textbf{Bold}: best among LLM methods; \underline{Underlined}: second best.}
  \label{tab:precision}
\end{table}

%% file: fig/results_qa_appendix.tex
\begin{table}[t]
  \centering
  \resizebox{\textwidth}{!}{%
  \begin{tabular}{@{}lcccccc@{\hspace{0.8em}}!{\vrule width 0.3pt}@{\hspace{0.8em}}cccccc@{}}
  \toprule
  & \multicolumn{6}{c}{MuSiQue} & \multicolumn{6}{c}{HotpotQA} \\
  \cmidrule(lr){2-7} \cmidrule(lr){8-13}
  & \multicolumn{3}{c}{Precision@$k$} & \multicolumn{3}{c}{Hit@$k$} & \multicolumn{3}{c}{Precision@$k$} & \multicolumn{3}{c}{Hit@$k$} \\
  \cmidrule(lr){2-4} \cmidrule(lr){5-7} \cmidrule(lr){8-10} \cmidrule(lr){11-13}
  Method & @1 & @3 & @5 & @1 & @3 & @5 & @1 & @3 & @5 & @1 & @3 & @5 \\
  \midrule
  Random & 13.3 & 13.3 & 13.2 & 13.3 & 36.0 & 53.8 & 9.9 & 10.0 & 10.0 & 9.9 & 28.5 & 44.7 \\
  BM25 & 57.0 & 36.5 & 27.7 & 57.0 & 81.5 & 89.6 & 83.6 & 48.1 & 33.1 & 83.6 & 96.6 & 98.6 \\
  RankZephyr & 82.3 & 51.2 & 36.6 & 82.3 & 95.0 & 97.6 & 94.8 & 56.6 & 36.1 & 94.8 & 98.9 & 99.6 \\
  MonoT5 & 86.1 & 54.8 & 38.6 & 86.1 & 97.9 & 99.4 & 92.5 & 57.5 & 37.1 & 92.5 & 99.0 & 99.8 \\
  DuoT5 & 84.8 & 53.2 & 37.9 & 84.8 & 96.7 & 99.1 & 86.4 & 53.2 & 34.5 & 86.4 & 92.6 & 94.9 \\
  LiT5 & 82.7 & 45.8 & 32.6 & 82.7 & 94.3 & 96.8 & 92.5 & 52.2 & 34.1 & 92.5 & 98.3 & 99.2 \\
  \addlinespace[3pt]
  \midrule
  \multicolumn{13}{@{}l}{\textbf{Qwen3-4B}} \\
  Zero-shot (two-step)         & 10.2 & 7.3 & 4.6 & 10.2 & 11.8 & 11.9 & 6.0 & 3.9 & 2.4 & 6.0 & 6.3 & 6.4 \\
  Zero-shot (single-step)      & 58.4 & 41.0 & 29.2 & 58.4 & 70.4 & 72.6 & 61.4 & 40.6 & 26.1 & 61.4 & 68.2 & 69.6 \\
  SFT                          & 69.5 & 46.3 & 31.4 & 69.5 & 91.3 & 94.5 & 76.6 & 45.8 & 29.2 & 76.6 & 90.1 & 92.5 \\
  \cdashline{1-13}\noalign{\vskip 2pt}
  Decoupled SFT (sel.\ only)   & 21.7 & 13.9 & 9.6 & 21.7 & 26.1 & 26.9 & 19.6 & 12.8 & 8.2 & 19.6 & 22.0 & 22.4 \\
  Decoupled SFT (rank.\ only)  & 79.8 & 53.6 & \underline{35.3} & 79.8 & 95.1 & \underline{96.9} & 87.6 & 56.2 & 35.7 & 87.6 & \underline{97.6} & 99.0 \\
  Decoupled SFT (full)         & 60.6 & 33.5 & 23.0 & 60.6 & 75.4 & 80.4 & 65.1 & 32.6 & 21.1 & 65.1 & 76.4 & 79.4 \\
  \cdashline{1-13}\noalign{\vskip 2pt}
  GRPO                         & \underline{90.8} & \underline{54.0} & 32.4 & \underline{90.8} & \underline{96.9} & \underline{96.9} & \textbf{96.1} & \textbf{62.0} & \textbf{37.2} & \textbf{96.1} & \textbf{99.4} & \textbf{99.5} \\
  \textbf{F-GRPO (ours)}       & \textbf{91.6} & \textbf{60.8} & \textbf{36.8} & \textbf{91.6} & \textbf{97.8} & \textbf{97.8} & \underline{96.0} & \underline{61.7} & \underline{37.0} & \underline{96.0} & \textbf{99.4} & \underline{99.4} \\
  \midrule
  \addlinespace[3pt]
  \multicolumn{13}{@{}l}{\textbf{Qwen3.5-2B}} \\
  Zero-shot (two-step)         & 5.4 & 3.4 & 2.4 & 5.4 & 7.0 & 7.5 & 2.0 & 1.3 & 0.8 & 2.0 & 2.4 & 2.5 \\
  Zero-shot (single-step)      & 13.2 & 9.2 & 6.5 & 13.2 & 17.8 & 18.9 & 8.0 & 5.0 & 3.4 & 8.0 & 9.6 & 9.9 \\
  SFT                          & 34.4 & 25.1 & 19.8 & 34.4 & 61.8 & 72.9 & 10.4 & 10.0 & 10.1 & 10.4 & 28.6 & 45.6 \\
  \cdashline{1-13}\noalign{\vskip 2pt}
  Decoupled SFT (sel.\ only)   & 0.7 & 0.4 & 0.4 & 0.7 & 1.0 & 1.3 & 0.5 & 0.4 & 0.3 & 0.5 & 0.9 & 0.9 \\
  Decoupled SFT (rank.\ only)  & 43.7 & 31.4 & 25.4 & 43.7 & 71.5 & 84.2 & 42.8 & 34.5 & 27.4 & 42.8 & \underline{75.4} & 87.6 \\
  Decoupled SFT (full)         & 13.8 & 13.1 & 13.0 & 13.8 & 35.5 & 52.2 & 11.9 & 11.1 & 10.3 & 11.9 & 31.3 & 45.4 \\
  \cdashline{1-13}\noalign{\vskip 2pt}
  GRPO                         & \textbf{90.0} & \underline{53.1} & \underline{32.9} & \textbf{90.0} & \underline{97.3} & \textbf{98.3} & \underline{96.1} & \underline{57.6} & \underline{34.6} & \underline{96.1} & \textbf{99.1} & \textbf{99.2} \\
  \textbf{F-GRPO (ours)}       & \underline{89.9} & \textbf{62.1} & \textbf{37.7} & \underline{89.9} & \textbf{97.8} & \underline{97.8} & \textbf{96.3} & \textbf{61.0} & \textbf{36.6} & \textbf{96.3} & \textbf{99.1} & \underline{99.1} \\
  \bottomrule
  \end{tabular}%
  }
  \caption{Precision@$k$ and Hit@$k$ on MuSiQue and HotpotQA for Qwen3-4B and Qwen3.5-2B. All values are percentages. \textbf{Bold}: best among LLM methods; \underline{Underlined}: second best.}
  \label{tab:qa_precision_hit}
\end{table}